\renewcommand{\citet}{\citep}
\renewcommand*{\backref}[1]{}
\renewcommand*{\backrefalt}[4]{{\footnotesize [%
    \ifcase #1 Not cited.%
	\or Cited on page~#2%
	\else Cited on pages #2%
	\fi%
]}}
\DeclareMathOperator*{\argmin}{arg\,min}
\newcommand{\argminSide}{\textnormal{arg\,min}}
\newcommand{\defeq}{\vcentcolon=}
\newcommand{\new}[1]{#1}
\newcommand{\loss}{\mathcal{L}}  
\newcommand{\outSymbol}{A}
\newcommand{\inSymbol}{B}
\newcommand{\paramSymbol}{\boldsymbol{\theta}}
\newcommand{\outParam}{\paramSymbol_{\!\outSymbol}}
\newcommand{\inParam}{\paramSymbol_{\!\inSymbol}}
\newcommand{\bothParam}{\boldsymbol{\omega}}
\newcommand{\outLoss}{\loss_{\!\outSymbol}}
\newcommand{\inLoss}{\loss_{\!\inSymbol}}
\newcommand{\gradSymbol}{\boldsymbol{g}}
\newcommand{\bothGrad}{\hat{\gradSymbol}}
\newcommand{\outGrad}{\gradSymbol_{\!\outSymbol}}
\newcommand{\inGrad}{\gradSymbol_{\!\inSymbol}}
\newcommand{\momCoeff}{\beta}
\newcommand{\momVal}{\boldsymbol{\mu}}
\newcommand{\lr}{\alpha}
\newcommand{\spectralRadius}{\rho}
\DeclareMathOperator*{\spectrum}{Sp}
\newcommand{\condition}{\kappa}
\newcommand{\identity}{\boldsymbol{I}}
\newcommand{\fixedPointOp}{\boldsymbol{F}_{\lr, \momCoeff}}
\newcommand{\dynamics}{\boldsymbol{R}}
\newcommand{\dimSymbol}{d}
\newcommand{\inDim}{\dimSymbol_{\inSymbol}}
\newcommand{\outDim}{\dimSymbol_{\outSymbol}}
\newcommand{\bothDim}{\dimSymbol}
\newcommand{\eigval}{\lambda}
\newcommand{\transpose}{\top}
\newcommand\newsubcap[1]{\phantomcaption%
       \caption*{\figurename~\thefigure(\thesubfigure): #1}}
\newcommand{\eqspace}{\,\,\,\,\,}
\newcommand{\resultName}{Corollary}
\newtheorem{lemma}{Lemma}
\newcommand{\localTitle}{\vspace{-0.000\textheight}Complex Momentum for Optimization in Games\vspace{-0.000\textheight}}
\title{\localTitle}
\author{%
  Jonathan Lorraine$\vphantom{e}^{1, 2}$\hspace{0.02\textwidth} David Acuna$\vphantom{e}^{1, 2, 3}$\hspace{0.02\textwidth} Paul Vicol$\vphantom{e}^{1, 2}$\hspace{0.02\textwidth} David Duvenaud$\vphantom{e}^{1, 2}$\\
  University of Toronto$\vphantom{e}^{1}$\hspace{0.02\textwidth} Vector Institute$\vphantom{e}^{2}$\hspace{0.02\textwidth} NVIDIA$\vphantom{e}^{3}$\\
  \texttt{\{lorraine, davidj, pvicol, duvenaud\}@cs.toronto.edu}
}
\begin{document}
    \maketitle
    \vspace{-0.03\textheight}
    \begin{abstract}
        \vspace{-0.015\textheight}
        We generalize gradient descent with momentum for optimization in differentiable games to have complex-valued momentum.
        We give theoretical motivation for our method by proving convergence on bilinear zero-sum games for simultaneous and alternating updates.
        Our method gives real-valued parameter updates, making it a drop-in replacement for standard optimizers.
        We empirically demonstrate that complex-valued momentum can improve convergence in realistic adversarial games---like generative adversarial networks---by showing we can find better solutions with an almost identical computational cost.
        We also show a practical generalization to a complex-valued Adam variant, which we use to train BigGAN to better inception scores on CIFAR-10.
    \end{abstract}
    \vspace{-0.025\textheight}
    \section{Introduction}\label{sec:introduction}
    \vspace{-0.01\textheight}
        Gradient-based optimization has been critical for the success of machine learning, updating a single set of parameters to minimize a single loss.
        A growing number of applications require learning in games, which generalize single-objective optimization.
        Common examples are GANs~\citep{goodfellow2014generative}, actor-critic models~\citep{pfau2016connecting}, curriculum learning~\citep{baker2019emergent, balduzzi2019open, sukhbaatar2017intrinsic}, hyperparameter optimization~\citep{lorraine2018stochastic, lorraine2019optimizing, mackay2019self, raghu2020teaching}, adversarial examples~\citep{bose2020adversarial, yuan2019adversarial}, learning models~\citep{rajeswaran2020game, abachi2020policy,  baconlagrangian}, domain adversarial adaptation~\citep{acuna2021fdomainadversarial}, neural architecture search~\citep{grathwohl2018gradient, adam2019understanding}, and meta-learning~\citep{ren2018meta, ren2020flexible}.
        
        Games consist of multiple players, each with parameters and objectives.
        We often want solutions where no player gains from changing their strategy unilaterally, e.g., Nash equilibria~\citep{morgenstern1953theory} or Stackelberg equilibria~\citep{von2010market}.
        Classical gradient-based learning often fails to find these equilibria due to rotational dynamics~\citep{berard2019closer}.
        %
        Numerous saddle point finding algorithms for zero-sum games have been proposed~\citep{arrow1958studies, freund1999adaptive}.
        \citet{gidel2018negative} generalizes GD with momentum to games, showing we can use a negative momentum to converge if the eigenvalues of the Jacobian of the gradient vector field have a large imaginary part.
        We use the terminology in \citet{gidel2018negative} and say \textit{(purely) cooperative or adversarial games} for games with (purely) real or imaginary eigenvalues.
        Setups like GANs are not purely adversarial, but rather have both \emph{purely cooperative and adversarial eigenspaces} -- i.e., eigenspaces with purely real or imaginary eigenvalues.
        In cooperative eigenspaces, the players do not interfere with each other.
        
        We want solutions that converge with simultaneous and alternating updates in purely adversarial games -- a setup where existing momentum methods fail.
        Also, we want solutions that are robust to different mixtures of adversarial and cooperative eigenspaces, because this depends on the games eigendecomposition which can be intractable.
        To solve this we unify and generalize existing momentum methods~\citep{lucas2018aggregated, gidel2018negative} to recurrently linked momentum -- a setup with multiple recurrently linked momentum buffers with potentially negative coefficients shown in Figure~\ref{fig:recurrent_comp_graph}.
        
        We show that selecting two of these recurrently linked buffers with appropriate momentum coefficients can be interpreted as the real and imaginary parts of a single \emph{complex buffer} and complex momentum coefficient -- see Figure~\ref{fig:cm_comp_graph}.
        This setup (a) allows us to converge in adversarial games with simultaneous updates, (b) only introduces one new optimizer parameter -- the phase or $\arg$ of our momentum, (c) allows us to gain intuitions via complex analysis, (d) is trivial to implement in libraries supporting complex arithmetic, and (e) robustly converges for different eigenspace mixtures.
        
        
        Intuitively, our complex buffer stores historical gradient information, oscillating between adding or subtracting at a frequency dictated by the momentum coefficient.
        Classical momentum only adds gradients, and negative momentum changes between adding or subtracting each iteration, while we oscillate at an arbitrary (fixed) frequency -- see Figure~\ref{fig:frequency_vis}.
        This reduces rotational dynamics during training by canceling out opposing updates.
        
        
        \vspace{-0.00\textheight}
        \subsection*{Contributions}\label{subsec:contributions}
            \begin{itemize}
            	\item We provide generalizations and variants of classical~\citep{polyak1964some, nesterov1983method, sutskever2013importance}, negative~\citep{gidel2018negative, zhang2020suboptimality}, and aggregated~\citep{lucas2018aggregated} momentum for learning in differentiable games.
            	\item We show our methods converges on adversarial games -- including bilinear zero-sum games and the Dirac-GAN -- with simultaneous and alternating updates.
            	\item We illustrate a robustness during optimization, converging faster and over a larger range of mixtures of cooperative and adversarial games than existing first-order methods.
            	\item We give a practical extension of our method to a complex-valued Adam~\citep{kingma2014adam} variant, which we use to train a BigGAN~\citep{brock2018large} on CIFAR-10, improving \citep{brock2018large}'s inception scores.
            	\vspace{-0.005\textheight}
            \end{itemize}
        %
    \begin{wrapfigure}[20]{R}{.56\textwidth}
        \centering
        \begin{minipage}{.55\textwidth}
            \centering
            \vspace{-0.045\textheight}
            \lstset{
            language=Python,
            basicstyle=\footnotesize\ttfamily,
            alsoletter={., +},
            keywords=[2]{jnp.real, .3j},
            keywordstyle=[2]{\color{green!80!black}}
            }
            \textbf{Actual JAX implementation: changes in {\color{green}green}}\par\medskip
            \vspace{-0.015\textheight}
            \captionsetup{type=figure}
            \begin{lstlisting}
mass = .8 + .3j
def momentum(step_size, mass):
 ...
 def update(i, g, state):
  x, velocity = state
  velocity = mass * velocity + g
  x=x-jnp.real(step_size(i)*velocity)
  return x, velocity
 ...
            \end{lstlisting}
            %
            
        \end{minipage}
        \vspace{-0.02\textheight}
        \hspace{1cm}
        \caption{
            How to modify JAX's \ref{eq:single_objective_gd} with momentum \href{https://jax.readthedocs.io/en/latest/_modules/jax/experimental/optimizers.html}{{\color{blue}here}} to use complex momentum.
            The only changes are in {\color{green}green}.
            $\texttt{jnp.real}$ gets the real part of \texttt{step\_size} times the momentum buffer (called $\texttt{velocity}$ here).
            We use a complex \texttt{mass} for our method in this case $\momCoeff = |\momCoeff|\exp(i\arg(\momCoeff)) = \num{0.9}\exp(i\nicefrac{\pi}{\num{8}}) \approx .8 + .3 i$.
        }
        \label{fig:jax_code_change}
    \end{wrapfigure}
    
    \vspace{-0.01\textheight}
    \section{Background}\label{sec:background}
    \vspace{-0.01\textheight}
            Appendix Table~\ref{tab:TableOfNotation} summarizes our notation.
            Consider the optimization problem:
            \begin{equation}\label{eq:single_objective_opt} 
                    \paramSymbol^{*} \defeq \argminSide_{\paramSymbol} \loss(\paramSymbol) 
            \end{equation}
            We can find local minima of loss $\loss$ using (stochastic) gradient descent with step size $\lr$.
            We denote the loss gradient at parameters $\paramSymbol^{j}$ by $\gradSymbol^j \!\!\defeq\!\! \gradSymbol(\paramSymbol^j) \!\!\defeq\!\! \smash{\left. \nabla_{\paramSymbol} \loss(\paramSymbol) \right|_{\paramSymbol^{j}}}$.
            %
            \begin{equation}\label{eq:single_objective_gd}\tag{SGD}
                \smash{\paramSymbol^{j\!+\!1} = \paramSymbol^{j} - \alpha \gradSymbol^j}
            \end{equation}
            Momentum can generalize \ref{eq:single_objective_gd}.
            For example, Polyak's Heavy Ball~\citep{polyak1964some}:
            \begin{align}\label{eq:single_objective_polyak}
                \smash{\paramSymbol^{j\!+\!1} = \paramSymbol^{j} - \lr \gradSymbol^j + \momCoeff (\paramSymbol^{j} - \paramSymbol^{j-1})}
            \end{align}
            Which can be equivalently written with momentum buffer $\smash{\momVal^{j} = \nicefrac{(\paramSymbol^{j} - \paramSymbol^{j-1})}{\lr}}$.
            \begin{equation}\label{eq:single_objective_sgdm}\tag{SGDm}
                \smash{\momVal^{j\!+\!1} = \momCoeff \momVal^{j} - \gradSymbol^j,\quad\quad
                \paramSymbol^{j\!+\!1} = \paramSymbol^{j} + \lr \momVal^{j\!+\!1}}
            \end{equation}
            We can also generalize \ref{eq:single_objective_sgdm} to aggregated momentum~\citep{lucas2018aggregated}, shown in Appendix Algorithm~\ref{alg:aggmo}.
        \vspace{-0.01\textheight}
        \subsection{Game Formulations}
            Another class of problems is learning in \textit{games}, which includes problems like generative adversarial networks (GANs)~\citep{goodfellow2014generative}.
            We focus on $\num{2}$-player games ---with players denoted by $A$ and $B$---where each player minimizes their loss $\loss_A, \loss_B$ with their parameters $\paramSymbol_A \in \mathbb{R}^{\dimSymbol_{A}}$, $\paramSymbol_B \in \mathbb{R}^{\dimSymbol_{B}}$\!\!.
            Solutions to \num{2}-player games -- which are assumed unique for simplicity -- can be defined as:
            \begin{align}\label{eq:nash_equilibrium}
                \smash{\outParam^{*} \!\defeq\! \argminSide_{\outParam}\! \outLoss(\outParam,\! \inParam^{*}),\eqspace
                \inParam^{*} \!\defeq\! \argminSide_{\inParam}\! \inLoss(\outParam^{*},\! \inParam)}
            \end{align}
            In deep learning, losses are non-convex with many parameters, so we often focus on finding local solutions.
            If we have a player ordering, then we have a Stackelberg game.
            For example, in GANs, the generator is the leader, and the discriminator is the follower.
            In hyperparameter optimization, the hyperparameters are the leader, and the network parameters are the follower.
            If \smash{$\inParam^*(\outParam)$} denotes player $\inSymbol$'s best-response function, then Stackelberg game solutions can be defined as:
            \begin{align}\label{eq:stackelberg_equilibrium}
                \begin{smash}
                    \outParam^{*} \!\defeq \argminSide_{\outParam}\! \outLoss(\outParam,\! \inParam^{*}(\outParam)),\eqspace
                    \inParam^{*}(\outParam) \!\defeq \argminSide_{\inParam}\! \inLoss(\outParam,\! \inParam)
                \end{smash}
            \end{align}
            If $\outLoss$ and $\inLoss$ are differentiable in $\outParam$ and $\inParam$ we say the game is differentiable.
            We may be able to approximately find \smash{$\outParam^{*}$} efficiently if we can do \ref{eq:single_objective_gd} on:
            \begin{equation}
                \outLoss^*(\outParam) \defeq \outLoss(\outParam, \inParam^{*}(\outParam))
            \end{equation}
            Unfortunately, SGD would require computing $\nicefrac{d \outLoss^*}{d \outParam}$, which often requires $\nicefrac{d \inParam^{*}}{d \outParam}$, but \smash{$\inParam^{*}(\outParam)$} and its Jacobian are typically intractable.
            A common optimization algorithm to analyze for finding solutions is simultaneous SGD (\ref{eq:multi_objective_gd_long}) -- sometimes called gradient descent ascent for zero-sum games -- where \smash{$\outGrad^j \defeq \outGrad(\outParam^j, \inParam^j)$} and \smash{$\inGrad^j \defeq \inGrad(\outParam^j, \inParam^j)$} are estimators for \smash{$\left. \smash{\nabla_{\outParam}} \outLoss\right|_{\outParam^j, \inParam^j}$ and $\left. \smash{\nabla_{\inParam} \inLoss} \right|_{\outParam^j, \inParam^j}$}: 
            %
            \begin{align}\label{eq:multi_objective_gd_long}\tag{SimSGD}
                \smash{
                \outParam^{j\!+\!1} = \outParam^{j} - \lr \outGrad^j,\quad
                \inParam^{j\!+\!1} = \inParam^{j} - \lr \inGrad^j
                }
            \end{align}
            We simplify notation with the concatenated or joint-parameters $\smash{\bothParam \!\defeq\! [\outParam, \inParam] \!\in\! \mathbb{R}^{\bothDim}}$ and the joint-gradient vector field $\bothGrad: \mathbb{R}^{\bothDim} \to \mathbb{R}^{\bothDim}$, which at the $j^{th}$ iteration is the joint-gradient denoted:
            \begin{equation}
                \bothGrad^j \defeq \bothGrad(\bothParam^j) \defeq [\outGrad(\bothParam^j), \inGrad(\bothParam^j)] = [\outGrad^j, \inGrad^j]
            \end{equation}
            We extend to $n$-player games by treating $\bothParam$ and $\bothGrad$ as concatenations of the players' parameters and loss gradients, allowing for a concise expression of the \ref{eq:multi_objective_gd_long} update with momentum (\ref{eq:multi_objective_sgdm}):
            \begin{align}\label{eq:multi_objective_sgdm}\tag{SimSGDm}
                \smash{
                \momVal^{j\!+\!1} = \momCoeff \momVal^{j} - \bothGrad^j,\quad\quad
                \bothParam^{j\!+\!1} = \bothParam^{j} + \lr \momVal^{j\!+\!1}
                }
            \end{align}
            \begin{wrapfigure}{r}{.52\textwidth}
                    \vspace{-0.02\textheight}
                \end{wrapfigure}
            \citet{gidel2018negative} show classical momentum choices of $\momCoeff \in [\num{0}, \num{1})$ do not improve solution speed over \ref{eq:multi_objective_gd_long} in some games, while negative momentum helps if the Jacobian of the joint-gradient vector field $\nabla_{\bothParam} \bothGrad$ has complex eigenvalues.
            Thus, for purely adversarial games with imaginary eigenvalues, any non-negative momentum and step size will not converge.
            For cooperative games -- i.e., minimization -- $\nabla_{\bothParam} \bothGrad$ has strictly real eigenvalues because it is a losses Hessian, so classical momentum works well.
            %
            %
             \begin{figure}[b]
                \vspace{-0.025\textheight}
                \centering
                \begin{subfigure}{.16\linewidth}
                    \begin{tikzpicture}[
                            > = stealth, 
                            shorten > = 1pt, 
                            auto,
                            node distance = 1.8cm, 
                            semithick 
                        ]
                        \tikzstyle{state}=[
                            draw = black,
                            thick,
                            fill = white,
                            minimum size = 4mm
                        ]
                
                        \node[state] (gNext) {gradient};
                        \node[state] (more) [yshift=.6cm, below of=gNext] {$\momVal$};
                        \node[state] (wNext) [yshift=.6cm, below of=more] {update};
                        
                        \path[->]
                            (gNext) 
                                    edge                node                [xshift=-0cm,]{}              (more)
                            (more)    
                                    edge                node                [xshift=0cm, yshift=0cm]{$\lr$}          (wNext)
                                    edge[loop left]    node                {$\momCoeff$}                             (re)
                          ;
                    \end{tikzpicture}
                    \caption{Classical~\citep{polyak1964some}}
                \end{subfigure}
                \begin{subfigure}{.27\linewidth}
                    \begin{tikzpicture}[
                            > = stealth, 
                            shorten > = 1pt, 
                            auto,
                            node distance = 1.8cm, 
                            semithick 
                        ]
                        \tikzstyle{state}=[
                            draw = black,
                            thick,
                            fill = white,
                            minimum size = 4mm
                        ]
                
                        \node[state] (gNext) {gradient};
                        \node[state] (re) [below left of=gNext] {$\momVal_{(1)}$};
                        \node[state] (more) [xshift=-.55cm, right of=re] {\dots};
                        \node[state] (im) [below right of=gNext] {$\momVal_{(n)}$};
                        \node[state] (wNext) [below right of=re] {update};
                        
                        \path[->]
                            (re)    
                                    edge                node                [xshift=-.7cm, yshift=-.45cm]{$\lr_{(1)}$}          (wNext)
                                    edge[loop above]    node                {$\momCoeff_{(1)}$}                             (re)
                            (im)    
                                    edge                node                {$\lr_{(n)}$}                                       (wNext)
                                    edge[loop above]    node                {$\momCoeff_{(n)}$}                             (im)
                            (gNext) edge                node                [xshift=-.6cm, yshift=.65cm]{} (re)
                                    edge                node                [xshift=-.2cm,]{}              (im)
                                    edge                node                [xshift=-0cm,]{}              (more)
                            (more)    
                                    edge                node                [xshift=0cm, yshift=-.25cm]{}          (wNext)
                                    edge[loop right]    node                [xshift=0cm]{}                             (re)
                          ;
                    \end{tikzpicture}
                    \caption{Aggregated~\citep{lucas2018aggregated}}
                    \label{fig:aggmo_comp_graph}
                \end{subfigure}
                \begin{subfigure}{.27\linewidth}
                    \begin{tikzpicture}[
                            > = stealth, 
                            shorten > = 1pt, 
                            auto,
                            node distance = 1.8cm, 
                            semithick 
                        ]
                        \tikzstyle{state}=[
                            draw = black,
                            thick,
                            fill = white,
                            minimum size = 4mm
                        ]
                
                        \node[state] (gNext) {gradient};
                        \node[state] (re) [below left of=gNext] {$\momVal_{(1)}$};
                        \node[state] (im) [below right of=gNext] {$\momVal_{(2)}$};
                        \node[state] (wNext) [below right of=re] {update};
                        
                        \path[->]
                            (re)    edge[bend left]     node                [yshift=-.1cm]{$\momCoeff_{(1,2)}$}                             (im)
                                    edge                node                [xshift=-.7cm, yshift=-.45cm]{$\lr_{(1)}$}          (wNext)
                                    edge[loop above]    node                {$\momCoeff_{(1,1)}$}                             (re)
                            (im)    edge[bend left]     node                [xshift=0cm, yshift=.65cm]{$\momCoeff_{(2,1)}$}   (re)
                                    edge                node                {$\lr_{(2)}$}                                       (wNext)
                                    edge[loop above]    node                {$\momCoeff_{(2,2)}$}                             (im)
                            (gNext) edge                node                [xshift=-.6cm, yshift=.65cm]{} (re)
                                    edge                node                [xshift=-.2cm,]{}              (im)
                          ;
                    \end{tikzpicture}
                    \caption{Recurrently linked (new)}
                    \label{fig:recurrent_comp_graph}
                \end{subfigure}
                \begin{subfigure}{.27\linewidth}
                    \begin{tikzpicture}[
                            > = stealth, 
                            shorten > = 1pt, 
                            auto,
                            node distance = 1.8cm, 
                            semithick 
                        ]
                        \tikzstyle{state}=[
                            draw = black,
                            thick,
                            fill = white,
                            minimum size = 4mm
                        ]
                
                        \node[state] (gNext) {gradient};
                        \node[state] (re) [below left of=gNext] {$\Re(\momVal$)};
                        \node[state] (im) [below right of=gNext] {$\Im(\momVal$)};
                        \node[state] (wNext) [below right of=re] {update};
                        
                        \path[->]
                            (re)    edge[bend left]     node                [yshift=-.1cm]{${\color{red}\Im(\momCoeff)}$}                 (im)
                                    edge                node                [xshift=-.5cm, yshift=-.45cm]{$\lr$}     (wNext)
                                    edge[loop above]    node                {{\color{blue}$\Re(\momCoeff)$}}                (re)
                            (im)    edge[bend left]     node                 [xshift=0cm, yshift=.65cm]{$-{\color{red}\Im(\momCoeff)}$}  (re)
                                    edge[loop above]    node                {{\color{blue}$\Re(\momCoeff)$}}                (im)
                            (gNext) edge                node                {}                                              (re)
                          ;
                    \end{tikzpicture}
                    \caption{Complex (ours)}
                    \label{fig:cm_comp_graph}
                \end{subfigure}
                \vspace{-0.005\textheight}
                \caption{
                        \new{
                        We show computational diagrams for momentum variants simultaneously updating all players parameters, which update the momentum buffers $\momVal$ at iteration $j\!+\!1$ with coefficient $\momCoeff$ via \smash{$\momVal^{j\!+\!1} \!=\! (\momCoeff \momVal^{j} - $gradient$)$}.
                        Our parameter update is a linear combination of the momentum buffers weighted by step sizes $\lr$.
                        \emph{(a)} Classical momentum~\citep{polyak1964some, sutskever2013importance}, with a single buffer and coefficient $\momCoeff \in [0, 1)$.
                        \emph{(b)} Aggregated momentum~\citep{lucas2018aggregated} which adds multiple buffers with different coefficients.
                        \emph{(c)} Recurrently linked momentum, which adds cross-buffer coefficients and updates the buffers with \smash{$\momVal_{(k)}^{j+1} \!=\! (\sum_l \momCoeff_{(l, k)}\momVal_{(l)}^{j} - $gradient$)$}.
                        We allow $\momCoeff_{(l, k)}$ to be negative like negative momentum~\citep{gidel2018negative} for solutions with simultaneous updates in adversarial games.
                        \emph{(d)} Complex momentum is a special case of recurrently linked momentum with two buffers and \smash{${\color{blue}\momCoeff_{(1,1)}}\!=\!{\color{blue}\momCoeff_{(2,2)}} \!=\! {\color{blue}\Re(\momCoeff)}$, ${\color{red}\momCoeff_{(1,2)}} \!=\! -{\color{red}\momCoeff_{(2,1)}} \!=\! {\color{red}\Im(\momCoeff)}$}.
                        Analyzing other recurrently linked momentum setups is an open problem.
                        }
                }\label{fig:scalar_complex}
            \end{figure}
            \new{
            \vspace{-0.0145\textheight}
            \subsection{Limitations of Existing Methods}\label{sec:limitation_existing}
            \vspace{-0.0125\textheight}
                \textbf{Higher-order:} Methods using higher-order gradients are often harder to parallelize across GPUs,~\citep{osawa2019large}, get attracted to bad saddle points~\citep{mescheder2017numerics}, require estimators for inverse Hessians~\citep{schafer2019competitive, wang2019solving}, are complicated to implement, have numerous optimizer parameters, and can be more expensive in iteration and memory cost~\citep{hemmat2020lead, wang2019solving, schafer2019competitive, schafer2020competitive, czarnecki2020real, zhang2020newton}.
                Instead, we focus on first-order methods.
                
                \textbf{First-order:} Some first-order methods such as extragradient~\citep{korpelevich1976extragradient} require a second, costly, gradient evaluation per step.
                Similarly, methods alternating player updates are bottlenecked by waiting until after the first player's gradient is used to evaluate the second player's gradient.
                But, many deep learning setups can parallelize computation of both players' gradients, making alternating updates effectively cost another gradient evaluation.
                We want a method which updates with the effective cost of one gradient evaluation.
                Also, simultaneous updates are a standard choice in some settings~\citep{acuna2021fdomainadversarial}.
                
                \textbf{Robust convergence:} We want our method to converge in purely adversarial game's with simultaneous updates -- a setup where existing momentum methods fail~\citep{gidel2018negative}.
                Furthermore, computing a games eigendecomposition is often infeasibly expensive, so we want methods that robustly converge over different mixtures of adversarial and cooperative eigenspaces.
                We are particularly interested in eigenspace mixtures that that are relevant during GAN training -- see Figure~\ref{fig:gan_spectrum_main} and Appendix Figure~\ref{fig:gan_decomp}.
            
            \vspace{-0.0145\textheight}
            \subsection{Coming up with our Method}
            \vspace{-0.0125\textheight}
                \textbf{Combining existing methods:}
                Given the preceding limitations, we would like a robust first-order method using a single, simultaneous gradient evaluation.
                We looked at combining aggregated~\citep{lucas2018aggregated} with negative~\citep{gidel2018negative} momentum by allowing negative coefficients, because these methods are first-order and use a single gradient evaluation -- see Figure~\ref{fig:aggmo_comp_graph}.
                Also, aggregated momentum provides robustness during optimization by converging quickly on problems with wide range of conditioning, while negative momentum works in adversarial setups.
                We hoped to combine their benefits, gaining robustness to different mixtures of adversarial and cooperative eigenspaces.
                However, with this setup we could not find solutions that converge with simultaneous updates in purely adversarial games.
                
                \textbf{Generalize to allow solutions:}
                We generalized the setup to allow recurrent connections between momentum buffers, with potentially negative coefficients -- see Figure~\ref{fig:recurrent_comp_graph} and Appendix Algorithm~\ref{alg:recurrent_momentum}.
                There are optimizer parameters so this converges with simultaneous updates in purely adversarial games, while being first-order with a single gradient evaluation -- see \resultName~\ref{thm:theorem_existence}.
                However, in general, this setup could introduce many optimizer parameters, have unintuitive behavior, and not be amenable to analysis.
                So, we choose a special case of this method to help solve these problems.
                
                \textbf{A simple solution:} With two momentum buffers and correctly chosen recurrent weights, we can interpret our buffers as the real and imaginary part of one complex buffer -- see Figure~\ref{fig:cm_comp_graph}.
                This method is (a) capable of converging in purely adversarial games with simultaneous updates -- \resultName~\ref{thm:theorem_existence}, (b) only introduces one new optimizer parameter -- the phase of the momentum coefficient, (c) is tractable to analyze and have intuitions for with Euler's formula -- ex., Eq. (\ref{eq:polar_complex_update}), (d) is trivial to implement in libraries supporting complex arithmetic -- see Figure~\ref{fig:jax_code_change}, and (e) can be robust to games with different mixtures of cooperative and adversarial eigenspaces -- see Figure~\ref{fig:partial_coop_phases}.
            }
        %
            \begin{figure}[b]
                \vspace{-0.04\textheight}
                \centering
                    \begin{tikzpicture}
                        \centering
                        \node (img1){\includegraphics[trim={.1cm 1.cm 1.0cm .35cm},clip,width=.47\linewidth]{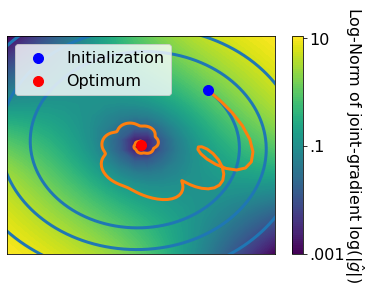}};
                        \node[left=of img1, rotate=90, node distance=0cm, xshift=.9cm, yshift=-.9cm, font=\color{black}] {Discriminator};
                        \node[below=of img1, node distance=0cm, xshift=-.7cm, yshift=1.2cm,font=\color{black}] {Generator};
                        \node[right=of img1, rotate=270, node distance=0cm, xshift=-1.8cm, yshift=-1.25cm, font=\color{black}] {Norm of joint-gradient $\|\bothGrad\|$};
                        
                        
                        \node (img3)[right=of img1, node distance=0cm, xshift=-.75cm, yshift=-.2cm, font=\color{black}]{\includegraphics[trim={.8cm .8cm .25cm .2cm},clip,width=.45\linewidth]{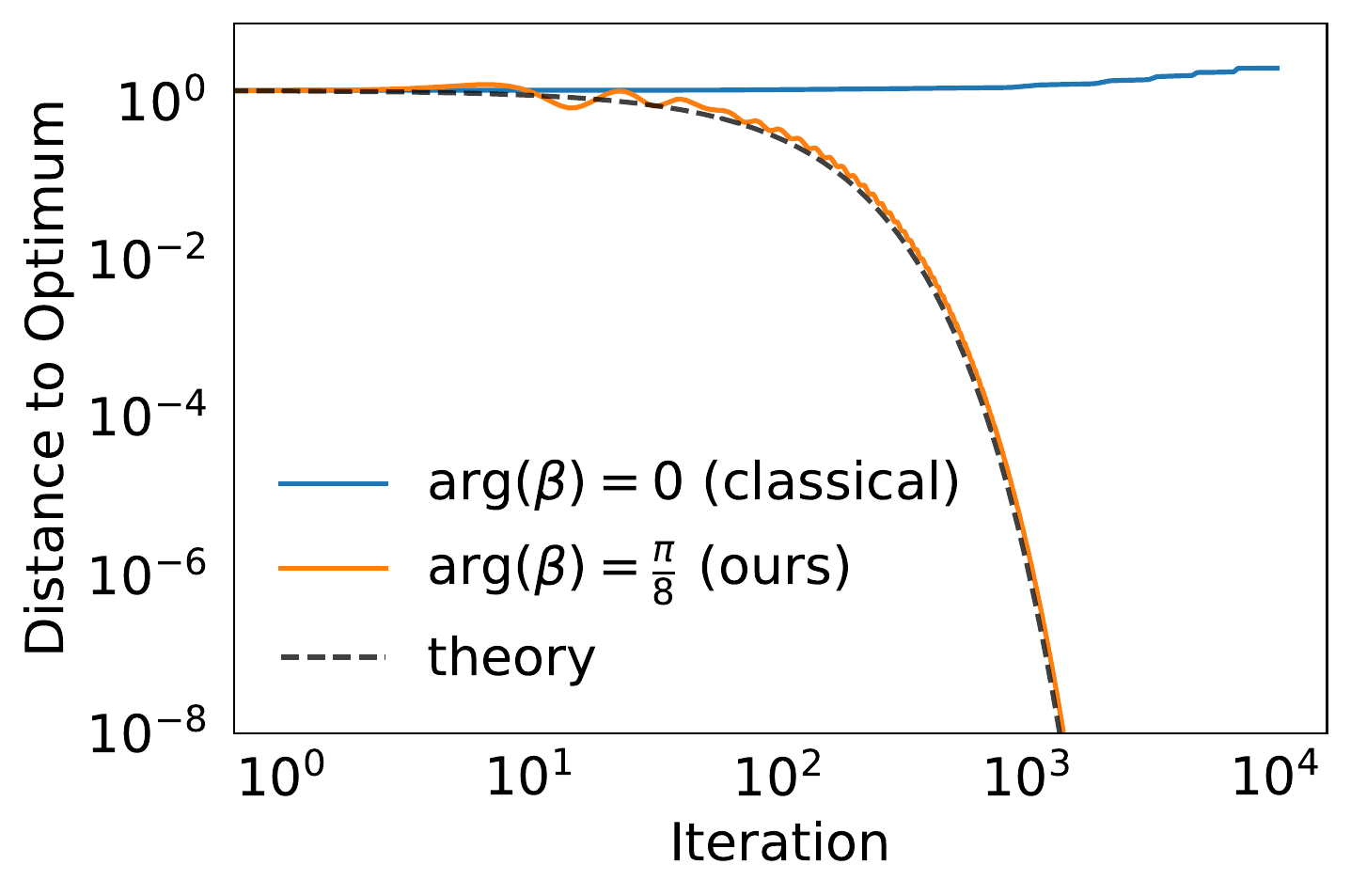}};
                        \node[left=of img3, rotate=90, node distance=0cm, xshift=1.75cm, yshift=-1cm, font=\color{black}] {Distance to optimum};
                        \node[below=of img3, node distance=0cm, xshift=.15cm, yshift=1.25cm,font=\color{black}] {Iterations};
                    \end{tikzpicture}
                    \vspace{-0.02\textheight}
                    \caption{
                        Complex momentum helps correct rotational dynamics when training a Dirac-GAN~\citep{mescheder2018training}.
                        \emph{Left:} Parameter trajectories with step size $\lr \!=\! 0.1$ and momentum $\momCoeff \!=\! 0.9 \exp(i \nicefrac{\pi}{8})$.
                        We include the classical, real and positive momentum which diverges for any step size.
                        \emph{Right:} The distance from optimum, which has a linear convergence rate matching our prediction with Theorem~\ref{thm:theorem} and (\ref{eq:spectrum_desired}).
                    }
                    \label{fig:trajectories_limited}
                \vspace{-0.01\textheight}
            \end{figure}
            %
            %
            
            %
        \vspace{-0.0125\textheight}
        \section{Complex Momentum}\label{sec:theory}
        \vspace{-0.0125\textheight}
            We describe our proposed method, where the momentum coefficient $\momCoeff \in \mathbb{C}$, step size $\lr \in \mathbb{R}$, momentum buffer $\momVal \in \mathbb{C}^{\bothDim}$, and player parameters $\bothParam \in \mathbb{R}^{\bothDim}$.
            The simultaneous (or Jacobi) update is:
            \vspace{-0.005\textheight}
            \newcommand{\history}{\boldsymbol{h}}
            \begin{align*}\label{eq:simul_update}\tag{SimCM}
                \smash{
                \momVal^{j\!+\!1} = \momCoeff \momVal^{j} - \bothGrad^j,\eqspace
                \bothParam^{j\!+\!1} = \bothParam^j + \Re(\lr \momVal^{j\!+\!1})
                }
            \end{align*}
            There are many ways to get a real-valued update from $\momVal \in \mathbb{C}$, but we only consider updates equivalent to classical momentum when $\momCoeff \in \mathbb{R}$.
            Specifically, we simply update the parameters using the real component of the momentum: $\Re(\momVal)$.
            
            %
            \begin{wrapfigure}{r}{0.35\linewidth}
                \vspace{-0.04\textheight}
                \begin{minipage}{0.35\textwidth}
                    \begin{algorithm}[H]
                        \caption{(\ref{eq:simul_update}) Momentum} \label{alg:simultaneous_complex}
                        \begin{algorithmic}[1]
                            \State $\smash{\momCoeff, \lr \in \mathbb{C}, \momVal \in \mathbb{C}^{\bothDim}, \bothParam^{\num{0}} \in \mathbb{R}^{\bothDim}}$
                            \For{$i = \num{1} \dots N$ $\vphantom{\outGrad^j}$}
                                \State $\smash{\momVal^{j\!+\!1} \!=\! \momCoeff \momVal^j \!-\! \bothGrad^j}$ 
                                \State $\bothParam^{j\!+\!1} \!=\! \bothParam^{j} \!+\! \Re(\lr \momVal^{j\!+\!1})$ 
                            \EndFor
                            \Return $\bothParam^N$
                        \end{algorithmic}
                    \end{algorithm}
                \end{minipage}
                \vspace{-0.04\textheight}
            \end{wrapfigure}
            We show the \ref{eq:simul_update} update in Algorithm~\ref{alg:simultaneous_complex} and visualize it in Figure~\ref{fig:cm_comp_graph}.
            We also show the alternating (or Gauss-Seidel) update, which is common for GAN training:
            %
            \begin{align*}\label{eq:alt_update}\tag{AltCM}
                \momVal^{j\!+\!1}_{\outSymbol} \!&=\! \momCoeff \momVal^{j}_{\outSymbol} \!-\! \bothGrad_{\outSymbol}(\bothParam^j\!),
                \outParam^{j\!+\!1} \!=\! \outParam^j \!+\! \Re(\lr \momVal^{j\!+\!1}_{\outSymbol}\!)\\
                \momVal^{j\!+\!1}_{\inSymbol} \!&=\! \momCoeff \momVal^{j}_{\inSymbol} \!-\! \bothGrad_{\inSymbol}(\outParam^{j\!+\!1}\!\!, \inParam^{j}\!),
                \inParam^{j\!+\!1} \!=\! \inParam^j \!+\! \Re(\lr \momVal^{j\!+\!1}_{\inSymbol}\!)
            \end{align*}
            %
            
            \paragraph{Generalizing negative momentum:} Consider the negative momentum from \citet{gidel2018negative}: $\bothParam^{j\!+\!1} = \bothParam^j - \lr \bothGrad^j + \momCoeff (\bothParam^j - \bothParam^{j-1})$.
            %
            Expanding (\ref{eq:simul_update}) with $\momVal^j = \nicefrac{(\bothParam^j - \bothParam^{j-1})}{\lr}$ for real momentum shows the negative momentum method of \citet{gidel2018negative} is a special case of our method:
            \begin{align}
                \bothParam^{j\!+\!1} = \bothParam^j + \Re(\lr (\momCoeff\nicefrac{(\bothParam^j - \bothParam^{j-1})}{\lr} - \bothGrad^j))
                = \bothParam^j - \lr \bothGrad^j + \momCoeff (\bothParam^j - \bothParam^{j-1})
            \end{align}
            %
            
            \vspace{-0.0125\textheight}
            \subsection{Dynamics of Complex Momentum}\label{sec:dynamics}
            \vspace{-0.0125\textheight}
                For simplicity, we assume Numpy-style~\citep{numpy} component-wise broadcasting for operations like taking the real-part $\Re(\boldsymbol{z})$ of vector $\boldsymbol{z} = [z_1, \dots, z_n] \in \mathbb{C}^{n}$, with proofs in the Appendix.
                
                \new{
                Expanding the buffer updates with the polar components of $\momCoeff$ gives intuition for complex momentum:
                \begin{align}\label{eq:polar_complex_update}
                    \begin{split}
                        \vphantom{A^{A^{A^{A}}}}
                        \momVal^{j\!+\!1} \!=\! \momCoeff \momVal^{j} - \bothGrad^j &\iff 
                        \momVal^{j\!+\!1} \!=\! \momCoeff (\momCoeff (\cdots) - \bothGrad^{j-1}) - \bothGrad^j \iff
                        \momVal^{j\!+\!1} \!=\! \smash{-\sum_{k=0}^{k=j}\momCoeff^{k}\bothGrad^{j-k}} \iff\\
                        \Re(\momVal^{j\!+\!1}) \!=\! -&\sum_{k=0}^{k=j}|\momCoeff|^{k} \cos(k \arg(\momCoeff))\bothGrad^{j-k},\eqspace
                        \Im(\momVal^{j\!+\!1}) \!=\! -\sum_{k=0}^{k=j}|\momCoeff|^{k} \sin(k \arg(\momCoeff))\bothGrad^{j-k} 
                    \end{split}
                \end{align}
                The final line is simply by Euler's formula (\ref{eq:euler_formula}).
                From (\ref{eq:polar_complex_update}) we can see $\momCoeff$ controls the momentum buffer $\momVal$ by having $|\momCoeff|$ dictate prior gradient decay rates, while $\arg(\momCoeff)$ controls oscillation frequency between adding and subtracting prior gradients, which we visualize in Figure~\ref{fig:frequency_vis}.
                }
                \begin{figure}
                    \vspace{-0.065\textheight}
                    \centering
                    \begin{subfigure}{.43\textwidth}
                        \begin{tikzpicture}
                            \centering
                            \node (img1){\includegraphics[trim={1.0cm .6cm .97cm .9cm},clip,width=.89\linewidth]{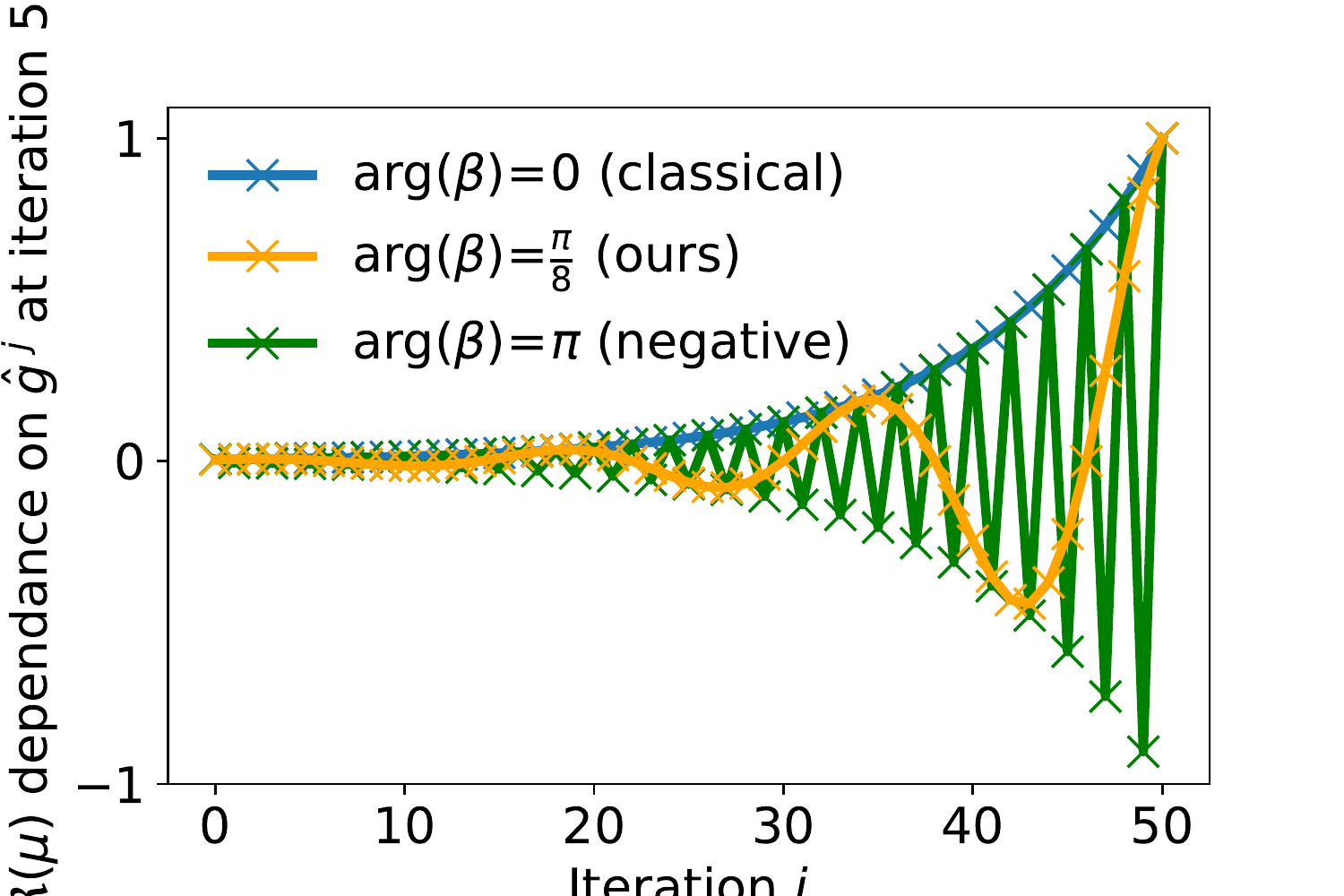}};
                            \node[left=of img1, node distance=0cm, rotate=90, xshift=2.0cm, yshift=-.8cm, font=\color{black}] {Dependence on gradient $k$};
                            \node[below=of img1, node distance=0cm, xshift=-.1cm, yshift=1.2cm,font=\color{black}] {Iteration $k$};
                        \end{tikzpicture}
                        \vspace{-0.02\textheight}
                        \newsubcap{
                            \new{
                            We show the real part of our momentum buffer -- which dictates the parameter update -- at the $50^{th}$ iteration $\Re(\momVal^{50})$ dependence on past gradients $\bothGrad^{k}$ for $k \!=\! 1 \dots 50$.
                            The momentum magnitude is fixed to $| \momCoeff| \!=\! \num{.9}$ as in Figure~\ref{fig:trajectories_limited}.
                            Euler's formula is used in (\ref{eq:polar_complex_update}) to for finding dependence or coefficient of $\bothGrad^{k}$ via $\Re(\momVal^{50}) \!=\! -\sum_{k=0}^{k=50}|\momCoeff|^{k} \cos(k \arg(\momCoeff))\bothGrad^{j-k}$.
                            Complex momentum allows smooth changes in the buffers dependence on past gradients.
                            }
                        }
                        \label{fig:frequency_vis}
                        \end{subfigure}
                    \hspace{0.01\textwidth}
                    \begin{subfigure}{.54\textwidth}
                        \begin{tikzpicture}
                            \centering
                            \node (img1){\includegraphics[trim={1.0cm .9cm .97cm .9cm},clip,width=.87\linewidth]{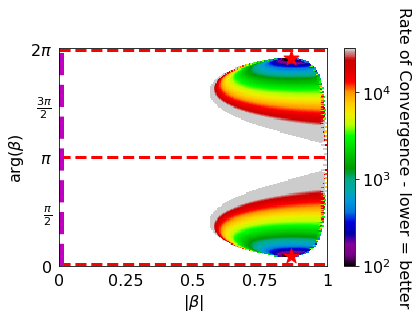}};
                            \node[left=of img1, node distance=0cm, rotate=90, xshift=2.0cm, yshift=-.9cm, font=\color{black}] {Momentum phase $\arg(\momCoeff)$};
                            \node[below=of img1, node distance=0cm, xshift=-.1cm, yshift=1.3cm,font=\color{black}] {Momentum magnitude $| \momCoeff |$};
                            \node[right=of img1, node distance=0cm, rotate=270, xshift=-2.1cm, yshift=-.9cm, font=\color{black}] {Number of steps to converge};
                        \end{tikzpicture}
                        \vspace{-0.03\textheight}
                        \newsubcap{
                            How many steps simultaneous complex momentum on a Dirac-GAN takes for a set solution distance.
                            We fix step size $\lr \!=\! \num{0.1}$ as in Figure~\ref{fig:trajectories_limited}, while varying the phase and magnitude of our momentum $\momCoeff  \!=\! |\momCoeff| \exp(i \arg(\momCoeff))$.
                            There is a {\color{red}red} star at the optima, dashed {\color{red}red} lines at real $\momCoeff$, and a dashed {\color{magenta}magenta} line for simultaneous gradient descent.
                            There are no real-valued $\momCoeff$ that converge for this -- or any -- $\lr$ with simultaneous updates ~\citep{gidel2018negative}.
                            Appendix Figure~\ref{fig:det_minmax_expAvg_alt_phase_tune} compares this with alternating updates (\ref{eq:alt_update}).
                        }
                        \label{fig:det_minmax_expAvg_simul_phase_tune}
                    \end{subfigure}
                    \vspace{-0.025\textheight}
                \end{figure}
                
                Expanding the parameter updates with the Cartesian components of $\lr$ and $\momCoeff$ is key for Theorem~\ref{thm:theorem}, which characterizes the convergence rate:
                \vspace{-0.005\textheight}
                \begin{align}\label{eq:cartesian_complex_update}
                    \begin{split}
                        \smash{\momVal^{j\!+\!1}} \!=\! \smash{\momCoeff \momVal^{j} - \bothGrad^j \iff} \\
                        \smash{\Re(\momVal^{j\!+\!1})} \!=\! \smash{\Re(\momCoeff) \! \Re(\momVal^j) \!-\! \Im(\momCoeff) \! \Im(\momVal^j)  \!-\! \Re(\bothGrad^j)},\,\,\,
                        &\Im(\momVal^{j\!+\!1}) \!=\! \Im(\momCoeff) \! \Re(\momVal^{j}) \!+\! \Re(\momCoeff) \! \Im(\momVal^{j}) 
                    \end{split}
                \end{align}
                \vspace{-0.02\textheight}
                \begin{align}\label{eq:parameter_dynamics}
                        \bothParam^{j\!+\!1} \!=\! \bothParam^j \!+\! \Re(\lr \momVal^{j\!+\!1}) \iff
                        \bothParam^{j\!+\!1} \!=\! \bothParam^j \!-\! \lr\bothGrad^j \!+\! \Re(\lr \momCoeff) \! \Re(\momVal^j) \!-\! \Im(\lr \momCoeff)\!\Im(\momVal^j)
                \end{align}
                So, we can write the next iterate with a fixed-point operator:\vspace{-0.005\textheight}
                \begin{equation}\label{eq:fixed_point_operator}
                    \smash{
            	    [\Re(\momVal^{j\!+\!1}),\! \Im(\momVal^{j\!+\!1}),\! \bothParam^{j\!+\!1}] \!\!=\! \fixedPointOp ([\Re(\momVal^{j}),\! \Im(\momVal^{j}),\! \bothParam^{j}]\!)
            	    }
            	\end{equation}
                (\ref{eq:cartesian_complex_update}) and (\ref{eq:parameter_dynamics}) allow us to write the Jacobian of $\fixedPointOp$ which can be used to bound convergence rates near fixed points, which we name the Jacobian of the augmented dynamics of buffer $\momVal$ and joint-parameters $\bothParam$ and denote with:
                \vspace{-0.01\textheight}
                \begin{equation}
            	    \dynamics \!\defeq\! 
            	    \nabla_{[\momVal, \bothParam]}\fixedPointOp = \begin{bmatrix}
                        \Re(\momCoeff) \identity & -\Im(\momCoeff) \identity & -\nabla_{\bothParam} \bothGrad\\
                        \Im(\momCoeff) \identity & \Re(\momCoeff) \identity & 0\\
                        \Re(\lr \momCoeff) \identity & -\Im(\lr \momCoeff) \identity & \identity \!-\! \lr\nabla_{\bothParam}\bothGrad\\
                        \end{bmatrix}
            	\end{equation}
            	So, for quadratic losses our parameters evolve via:
            	\begin{equation}\label{eq:dyn_system}
            	    \smash{
            	    [\Re(\momVal^{j\!+\!1}),\! \Im(\momVal^{j\!+\!1}),\! \bothParam^{j\!+\!1}]^{\transpose} \!\!=\! \dynamics \, [\Re(\momVal^{j}),\! \Im(\momVal^{j}),\! \bothParam^{j}]^{\transpose}\!
            	    }
            	\end{equation}
            	We can bound convergence rates by looking at the spectrum of $\dynamics$ with Theorem~\ref{thm:theorem}.
                \begin{restatable}[Consequence of Prop. 4.4.1 ~\citet{bertsekas2008nonlinear}]{thm}{mainThm}
                    \label{thm:theorem}
                    Convergence rate of complex momentum:
                    If the spectral radius $\spectralRadius(\dynamics) \!=\! \spectralRadius(\nabla_{[\momVal, \bothParam]}\fixedPointOp) \!<\! 1$, then, for $[\momVal, \bothParam]$ in a neighborhood of $[\momVal^*\!, \bothParam^*]$, the distance of $[\momVal^{j}\!, \bothParam^{j}]$ to the stationary point $[\momVal^*\!, \bothParam^*]$ converges at a linear rate $\mathcal{O}((\spectralRadius(\dynamics) + \epsilon)^j), \forall \epsilon \!>\! 0$.
                \end{restatable}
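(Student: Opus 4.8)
The plan is to treat this as the standard local-convergence result for fixed-point iterations (Ostrowski's theorem), specialized to the explicit Jacobian $\dynamics$ derived in~(\ref{eq:dyn_system}). First I would reduce to the linear case: for quadratic losses $\nabla_{\bothParam}\bothGrad$ is constant, so the augmented update~(\ref{eq:dyn_system}) is exactly linear, and writing $\boldsymbol{\delta}^j \defeq [\Re(\momVal^j), \Im(\momVal^j), \bothParam^j] - [\Re(\momVal^*), \Im(\momVal^*), \bothParam^*]$ for the deviation from the fixed point (where $\bothGrad^* = 0$ forces $\momVal^* = 0$ whenever $\momCoeff \neq 1$), I get $\boldsymbol{\delta}^{j+1} = \dynamics\,\boldsymbol{\delta}^j$ and hence $\boldsymbol{\delta}^j = \dynamics^j\,\boldsymbol{\delta}^0$. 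For general smooth losses, a first-order Taylor expansion of $\fixedPointOp$ about the fixed point (using $\fixedPointOp([\momVal^*,\bothParam^*]) = [\momVal^*,\bothParam^*]$ and $\nabla\fixedPointOp = \dynamics$ there) gives $\boldsymbol{\delta}^{j+1} = \dynamics\,\boldsymbol{\delta}^j + o(\|\boldsymbol{\delta}^j\|)$, reducing everything to controlling powers of $\dynamics$ plus a small remainder.

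The core step is the linear-algebra fact that $\spectralRadius(\dynamics) < 1$ forces geometric decay of $\dynamics^j$. The key tool is that for every $\epsilon > 0$ there is a submultiplicative operator norm $\|\cdot\|_\epsilon$ with $\|\dynamics\|_\epsilon \le \spectralRadius(\dynamics) + \epsilon$; this follows from the Jordan (or Schur) decomposition by conjugating with a diagonal scaling that shrinks the off-diagonal entries. Then $\|\dynamics^j\|_\epsilon \le (\spectralRadius(\dynamics)+\epsilon)^j$, and since all norms on a finite-dimensional space are equivalent, $\|\boldsymbol{\delta}^j\| \le C\,(\spectralRadius(\dynamics)+\epsilon)^j\,\|\boldsymbol{\delta}^0\|$, which is exactly the claimed $\mathcal{O}((\spectralRadius(\dynamics)+\epsilon)^j)$ rate in the quadratic case.

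To handle the general (non-quadratic) case I would fix $\epsilon$ small enough that $\spectralRadius(\dynamics) + \epsilon < 1$, work in the corresponding $\|\cdot\|_\epsilon$ norm, and absorb the remainder: continuity of the Jacobian lets me choose a neighborhood of $[\momVal^*,\bothParam^*]$ on which the Taylor error satisfies $\|o(\|\boldsymbol{\delta}^j\|)\|_\epsilon \le \epsilon'\|\boldsymbol{\delta}^j\|_\epsilon$ with $\spectralRadius(\dynamics)+\epsilon+\epsilon' < 1$. Then $\|\boldsymbol{\delta}^{j+1}\|_\epsilon \le (\spectralRadius(\dynamics)+\epsilon+\epsilon')\|\boldsymbol{\delta}^j\|_\epsilon$, so the iterates stay in the neighborhood and contract geometrically, giving the local linear rate. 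Since this is precisely the content of Prop.~4.4.1 of~\citet{bertsekas2008nonlinear}, in the write-up I would just verify its hypotheses -- that $\fixedPointOp$ is continuously differentiable with $\nabla\fixedPointOp = \dynamics$ at the fixed point and $\spectralRadius(\dynamics) < 1$ -- and then invoke it directly.

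The hard part will be the defective-eigenvalue case, which is exactly why the rate carries a $+\epsilon$ rather than being a clean $\mathcal{O}(\spectralRadius(\dynamics)^j)$. When $\dynamics$ is diagonalizable one has $\|\dynamics^j\| \le \kappa\,\spectralRadius(\dynamics)^j$ with $\kappa$ the eigenvector-matrix condition number, and no $\epsilon$ is needed; but nontrivial Jordan blocks introduce polynomial-in-$j$ prefactors of the form $j^{m-1}\spectralRadius(\dynamics)^j$, and the whole purpose of the $\epsilon$-norm construction is to bound these by $(\spectralRadius(\dynamics)+\epsilon)^j$. I would flag this as the one place where the bound is genuinely loose, and note that the block structure of $\dynamics$ in $\Re(\momCoeff)$, $\Im(\momCoeff)$, and $\nabla_{\bothParam}\bothGrad$ is what the later sections analyze in order to choose $\momCoeff$ minimizing $\spectralRadius(\dynamics)$.
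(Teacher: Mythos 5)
Your proposal is correct and follows essentially the same route as the paper's proof: reduce to the linear recursion $\boldsymbol{\delta}^{j+1} = \dynamics\,\boldsymbol{\delta}^j$ in the quadratic case, bound $\|\dynamics^j\|$ by $(\spectralRadius(\dynamics)+\epsilon)^j$ via an $\epsilon$-dependent matrix norm (the paper cites Lemma 11 of \citet{foucart2012}, which is exactly the Jordan-plus-diagonal-scaling construction you describe), transfer back to the Euclidean norm by finite-dimensional norm equivalence, and defer the general (non-quadratic) local statement to Prop.~4.4.1 of \citet{bertsekas2008nonlinear}. Your additional sketch of how the Taylor remainder is absorbed, and your remark on defective eigenvalues being the reason for the $+\epsilon$, are consistent elaborations of steps the paper leaves to the citation.
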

                %
                %
                \vspace{-0.005\textheight}
                Here, linear convergence means $\lim_{j \to \infty} \!\! \nicefrac{\| \bothParam^{j\!+\!1} - \bothParam^*\|}{\| \bothParam^{j} - \bothParam^*\|} \!\in\! (\num{0}, \num{1})$, where $\bothParam^*$ is a fixed point.
                We should select optimization parameters $\lr, \momCoeff$ so that the augmented dynamics spectral radius $\spectrum(\dynamics(\lr, \momCoeff)) \!<\! 1$---with the dependence on $\lr$ and $\momCoeff$ now explicit.
                We may want to express $\spectrum(\dynamics(\lr, \momCoeff))$ in terms of the spectrum $\spectrum(\nabla_{\bothParam}\bothGrad)$, as in Theorem 3 in \citet{gidel2018negative}:
                \vspace{-0.005\textheight}
                \begin{equation}\label{eq:spectrum_desired}
                    \smash{\boldsymbol{f}(\spectrum(\nabla_{\bothParam}\bothGrad), \lr, \momCoeff) \!=\! \spectrum(\dynamics(\lr, \momCoeff))}
                \end{equation}
                We provide a Mathematica command in Appendix~\ref{sec:poly} for a cubic polynomial $p$ characterizing $\boldsymbol{f}$ with coefficients that are functions of $\lr, \momCoeff$ \& $\eigval \in \spectrum(\nabla_{\bothParam}\bothGrad)$, whose roots are eigenvalues of $\dynamics$, which we use in subsequent results.
                \citet{o2015adaptive, lucas2018aggregated} mention that in practice we do not know the condition number, eigenvalues -- or the mixture of cooperative and adversarial eigenspaces  -- of a set of functions that we are optimizing, so we try to design algorithms which work over a large range.
                Sharing this motivation, we consider convergence behavior on games ranging from purely adversarial to cooperative.
                
                In Section~\ref{sec:exp_spectrum} at every non-real $\momCoeff$ we could select $\lr$ and $| \momCoeff |$ so Algorithm~\ref{alg:simultaneous_complex} converges.
                We define \emph{almost-positive} to mean $\arg(\momCoeff) \!=\! \epsilon$ for small $\epsilon$, and show there are almost-positive $\momCoeff$ which converge.\vspace{-0.01\textheight}
                \begin{restatable}[Convergence of Complex Momentum]{result}{resultExist}
                    \label{thm:theorem_existence}
                         There exist $\lr \in \mathbb{R}, \momCoeff \in \mathbb{C}$ so Algorithm~\ref{alg:simultaneous_complex} converges for bilinear zero-sum games.
                         More-so, for small $\epsilon$ (we show for $\epsilon = \frac{\pi}{\num{16}}$), if $\arg(\momCoeff) = \epsilon$ (i.e., almost-positive) or $\arg(\momCoeff) = \pi - \epsilon$ (i.e., almost-negative), then we can select $\lr, |\momCoeff|$ to converge.
                \end{restatable}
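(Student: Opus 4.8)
The plan is to reduce convergence to a spectral-radius condition via Theorem~\ref{thm:theorem} and then exhibit concrete optimizer parameters meeting it. By Theorem~\ref{thm:theorem}, Algorithm~\ref{alg:simultaneous_complex} converges near a fixed point whenever $\spectralRadius(\dynamics(\lr, \momCoeff)) < 1$, so it suffices to produce $\lr \in \mathbb{R}$ and $\momCoeff \in \mathbb{C}$ for which the augmented dynamics $\dynamics$ has all eigenvalues strictly inside the unit disk. For a bilinear zero-sum game the joint-gradient Jacobian $\nabla_{\bothParam}\bothGrad$ is of the skew form $\left[\begin{smallmatrix} 0 & M \\ -M^{\transpose} & 0\end{smallmatrix}\right]$, so its spectrum $\spectrum(\nabla_{\bothParam}\bothGrad)$ is purely imaginary, consisting of conjugate pairs $\pm i\sigma_k$ with $\sigma_k$ the singular values of $M$ -- exactly the purely adversarial regime where real momentum fails.

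First I would block-diagonalize. Although $\dynamics$ is a real $3\bothDim \times 3\bothDim$ matrix, its blocks are either scalar multiples of $\identity$ or copies of $\nabla_{\bothParam}\bothGrad$; conjugating by $\mathrm{diag}(P,P,P)$, where $P$ diagonalizes $\nabla_{\bothParam}\bothGrad$, leaves the scalar blocks fixed and decouples $\dynamics$ into $3 \times 3$ blocks indexed by the eigenvalues $\eigval \in \spectrum(\nabla_{\bothParam}\bothGrad)$. This is the content of Eq.~(\ref{eq:spectrum_desired}): the eigenvalues of $\dynamics$ are the roots of the cubic $p(z; \lr, \momCoeff, \eigval)$ supplied in Appendix~\ref{sec:poly}, whose coefficients are explicit polynomials in $\lr, \momCoeff,$ and $\eigval$. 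Since conjugate eigenvalue pairs $\pm i\sigma_k$ yield conjugate cubics, it suffices to control the roots for $\eigval = i\sigma$ with $\sigma > 0$; thus $\spectralRadius(\dynamics) < 1$ reduces to a root-location problem for a single cubic with complex coefficients.

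With this reduction the existence claim becomes concrete. For the first assertion it suffices to display one triple $(\lr, |\momCoeff|, \arg(\momCoeff))$ with $\arg(\momCoeff) \notin \{0, \pi\}$ making all three roots of modulus less than one; for the strengthened claim I would fix $\arg(\momCoeff) = \epsilon = \nicefrac{\pi}{16}$ (and, separately, $\pi - \epsilon$) and then choose $\lr, |\momCoeff|$. The clean rigorous route is the Schur--Cohn test, the complex-coefficient criterion that converts ``all roots of $p$ inside the unit disk'' into a finite system of polynomial inequalities in $\lr, |\momCoeff|, \cos\arg(\momCoeff), \sin\arg(\momCoeff)$, and $\sigma$; verifying these at $\arg(\momCoeff) = \nicefrac{\pi}{16}$ for a suitable $\lr, |\momCoeff|$ establishes the corollary. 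Because only the product $\lr\eigval$ enters after rescaling, a single representative eigenvalue controls the whole spectrum, so one parameter choice handles the general bilinear game.

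The hard part will be that the cubic has no usable closed-form roots, so a direct analytic bound on $\spectralRadius$ is awkward, and the governing inequalities are genuinely tight: at $\arg(\momCoeff) = 0$ (real positive momentum) and at $\arg(\momCoeff) = \pi$ (real negative momentum), no $\lr, |\momCoeff|$ converge with simultaneous updates on purely imaginary $\eigval$, by \citet{gidel2018negative}. The argument must therefore quantify how a strictly non-zero phase perturbs the marginal roots off the unit circle into its interior, and confirm that the admissible window in $(\lr, |\momCoeff|)$ does not collapse as $\epsilon \to 0$ -- which is precisely why the result is stated for a concrete small $\epsilon$ rather than in a vanishing limit. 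I would discharge this by the Schur--Cohn verification above, using the explicit cubic both to pin down a valid $(\lr, |\momCoeff|)$ and to certify $\spectralRadius(\dynamics) < 1$.
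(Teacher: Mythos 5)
Your proposal is correct and takes essentially the same route as the paper's own proof: reduce convergence to $\spectralRadius(\dynamics)<1$ via Theorem~\ref{thm:theorem}, decouple $\dynamics$ so its eigenvalues are roots of the cubic in Appendix~\ref{sec:poly}, exploit that the bilinear zero-sum spectrum is purely imaginary and that only the product $\lr\eigval$ enters after rescaling $\lr \propto \nicefrac{1}{|\eigval|}$, and then certify for concrete $(\lr, |\momCoeff|)$ at $\arg(\momCoeff)=\nicefrac{\pi}{16}$ and $\pi-\nicefrac{\pi}{16}$ that all roots lie strictly inside the unit disk. The only substantive difference is the final certification tool: you invoke the Schur--Cohn criterion, whereas the paper directly evaluates the roots with the cubic formula at grid-searched parameters ($|\momCoeff|=\num{.9}$ with rescaled step $\num{.025}$ for the almost-positive case, and $|\momCoeff|=\num{.986}$ with rescaled step $\num{.75}$ for the almost-negative case).
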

                \vspace{-0.0075\textheight}
                \textbf{Why show this?}
                Our result complements \citet{gidel2018negative} who show that for all real $\lr, \momCoeff$ Algorithm~\ref{alg:simultaneous_complex} \emph{does not} converge.
                We include the proof for bilinear zero-sum games, but the result generalizes to some games that are purely adversarial near fixed points, like Dirac GANs~\citep{mescheder2017numerics}.
                The result's second part shows evidence there is a sense in which the only $\momCoeff$ that do not converge are real (with simultaneous updates on purely adversarial games).
                It also suggests a form of robustness, because almost-positive $\momCoeff$ can approach acceleration in cooperative eigenspaces, while converging in adversarial eigenspaces, so almost-positive $\momCoeff$ may be desirable when we have games with an uncertain or variable mixtures of real and imaginary eigenvalues like GANs.
                Sections~~\ref{sec:exp_spectrum}, \ref{sec:train_small_gan}, and \ref{sec:train_big_gan} investigate this further.
                
            \vspace{-0.0125\textheight}
            \subsection{What about Acceleration?}\label{sec:acceleration}
            \vspace{-0.0125\textheight}
                With classical momentum, finding the step size $\lr$ and momentum $\momCoeff$ to optimize the convergence rate tractable if $\num{0} \!<\! l \!\leq\! L$ and $\smash{\spectrum(\nabla_{\bothParam}\bothGrad) \!\in\! [l, L]^{\bothDim}}$~\citep{goh2017momentum} -- i.e., we have an $l$-strongly convex and $L$-Lipschitz loss.
                The conditioning $\condition \!=\! \nicefrac{L}{l}$ can characterize problem difficulty.
                Gradient descent with an appropriate $\lr$ can achieve a convergence rate of $\smash{\frac{\condition - \num{1}}{\condition + \num{1}}}$, but using momentum with appropriate $(\lr^*\!, \momCoeff^*)$ can achieve an \emph{accelerated} rate of $\smash{\spectralRadius^* \!=\! \frac{\sqrt{\condition} - \num{1}}{\sqrt{\condition} + \num{1}}}$.
                However, there is no consensus for constraining $\smash{\spectrum(\nabla_{\bothParam}\bothGrad)}$ in games for tractable and useful results.
                Candidate constraints include monotonic vector fields generalizing notions of convexity, or vector fields with bounded eigenvalue norms capturing a kind of sensitivity~\citep{azizian2019tight}.
                %
                Figure~\ref{fig:gan_spectrum_main} shows $\smash{\spectrum(\nabla_{\bothParam}\bothGrad)}$ for a GAN -- we can attribute some eigenvectors to a single player's parameters.
                The discriminator can be responsible for the largest and smallest norm eigenvalues, suggesting we may benefit from varying $\lr$ and $\momCoeff$ for each player as done in Section~\ref{sec:train_big_gan}.
            \vspace{-0.0125\textheight}
            \subsection{Implementing Complex Momentum}
            \vspace{-0.0125\textheight}
                Complex momentum is trivial to implement with libraries supporting complex arithmetic like JAX~\citep{jax2018github} or Pytorch~\citep{paszke2017automatic}.
                Given an SGD implementation, we often only need to change a few lines of code -- see Figure~\ref{fig:jax_code_change}.
                Also, (\ref{eq:cartesian_complex_update}) and (\ref{eq:parameter_dynamics}) can be easily used to implement Algorithm~\ref{alg:simultaneous_complex} in a library without complex arithmetic.
                More sophisticated optimizers like Adam can trivially support complex optimizer parameters with real-valued updates, which we explore in Section~\ref{sec:train_big_gan}.
                
            \new{
            \vspace{-0.0125\textheight}
            \subsection{Scope and Limitations}
            \vspace{-0.0125\textheight}
                For some games, we need higher than first-order information to converge -- ex., pure-response games ~\citep{lorraine2019optimizing} -- because the first-order information for a player is identically zero.
                So, momentum methods only using first-order info will not converge in general.
                However, we can combine methods with second-order information and momentum algorithms~\citep{lorraine2019optimizing, raghu2020teaching}.
                Complex momentum's computational cost is almost identical to classical and negative momentum, except we now have a buffer with twice as many real parameters.
                We require one more optimization hyperparameter than classical momentum, which we provide an initial guess for in Section~\ref{sec:init_guess}.
            }
    \newcommand{\fixedPhase}{\frac{\pi}{\num{16}}}
    \vspace{-0.0175\textheight}
    \section{Experiments}\label{sec:experiments}
    \vspace{-0.015\textheight}
        We investigate complex momentum's performance in training GANs and games with different mixtures of cooperative and adversarial eigenspaces, showing improvements over standard baselines.
        Code for experiments will be available on publication, with reproducibility details in Appendix~\ref{app:experiments}.
        
        \textbf{Overview:} We start with a purely adversarial Dirac-GAN and zero-sum games, which have known solutions $\bothParam^* \!=\! (\outParam^*, \inParam^*)$ and spectrums $\spectrum(\nabla_{\bothParam} \bothGrad)$, so we can assess convergence rates.
        Next, we evaluate GANs generating $\num{2}$D distributions, because they are simple enough to train with a plain, alternating SGD.
        Finally, we look at scaling to larger-scale GANs on images which have brittle optimization, and require optimizers like Adam.
        Complex momentum provides benefits in each setup.
        
        \new{
        We only compare to first-order optimization methods, despite there being various second-order methods due limitations discussed in Section~\ref{sec:limitation_existing}.
        }
        \vspace{-0.0125\textheight}
        \subsection{Optimization in Purely Adversarial Games}\label{sec:single_param_det}
        \vspace{-0.0125\textheight}
            \newcommand{\bilinearInit}{(\num{1}, \num{1})}
            \newcommand{\bilinearLrMag}{\num{2}}
            \newcommand{\bilinearMomMag}{\nicefrac{\num{1}}{\num{3}}}
            \newcommand{\bilinearIterations}{\num{300}}
            \newcommand{\bilinearSmoothIterations}{\num{50}}
            \newcommand{\bilinearPhase}{\fixedPhase}
                \new{
                Here, we consider the optimizing the Dirac-GAN objective, which is surprisingly hard and where many classical optimization methods fail, because $\spectrum(\nabla_{\bothParam} \bothGrad)$ is imaginary near solutions:
                \vspace{-0.005\textheight}
                \begin{equation}\label{eq:min_max_xy}
                    \smash{
                    \min_x \max_y -\log(1 + \exp(-xy)) - \log(2)
                    }
                \end{equation}
                }
                Figure~\ref{fig:trajectories_limited} empirically verifies convergence rates given by Theorem~\ref{thm:theorem} with (\ref{eq:spectrum_desired}), by showing the optimization trajectories with simultaneous updates.
                
                Figure~\ref{fig:det_minmax_expAvg_simul_phase_tune} investigates how the components of the momentum $\momCoeff$ affect convergence rates with simultaneous updates and a fixed step size.
                The best $\momCoeff$ was almost-positive (i.e., $\arg(\momCoeff) \!=\! \epsilon$ for small $\epsilon$).
                We repeat this experiment with alternating updates in Appendix Figure~\ref{fig:det_minmax_expAvg_alt_phase_tune}, which are standard in GAN training.
                There, almost-positive momentum is best (but negative momentum also converges), and the benefit of alternating updates can depend on if we can parallelize player gradient evaluations.
        
        \newcommand{\ganSize}{\num{16}}
        \vspace{-0.02\textheight}
        \new{
        \subsection{How Adversarialness Affects Convergence Rates}\label{sec:exp_spectrum}
        \vspace{-0.01\textheight}
            Here, we compare optimization with first-order methods for purely adversarial, cooperative, and mixed games.
            We use the following game, allowing us to easily interpolate between these regimes:\vspace{-0.005\textheight}
            }
            \begin{equation}\label{eq:bilinear_adversarial_interpolate}
                \smash{
                    \min_{\boldsymbol{x}} \max_{\boldsymbol{y}}  \boldsymbol{x}^\transpose (\boldsymbol{\gamma} \boldsymbol{A}) \boldsymbol{y} + \boldsymbol{x}^\transpose ((\identity - \boldsymbol{\gamma})\boldsymbol{B}_1) \boldsymbol{x} - \boldsymbol{y}^\transpose ((\identity - \boldsymbol{\gamma})\boldsymbol{B}_2) \boldsymbol{y}
                }
            \end{equation}
             \begin{wrapfigure}[34]{r}{.51\textwidth}
                \vspace{-0.035\textheight}
                \centering
                \begin{tikzpicture}
                    \centering
                    \node (img1){\includegraphics[trim={.0cm .0cm .0cm .0cm},clip,width=.93\linewidth]{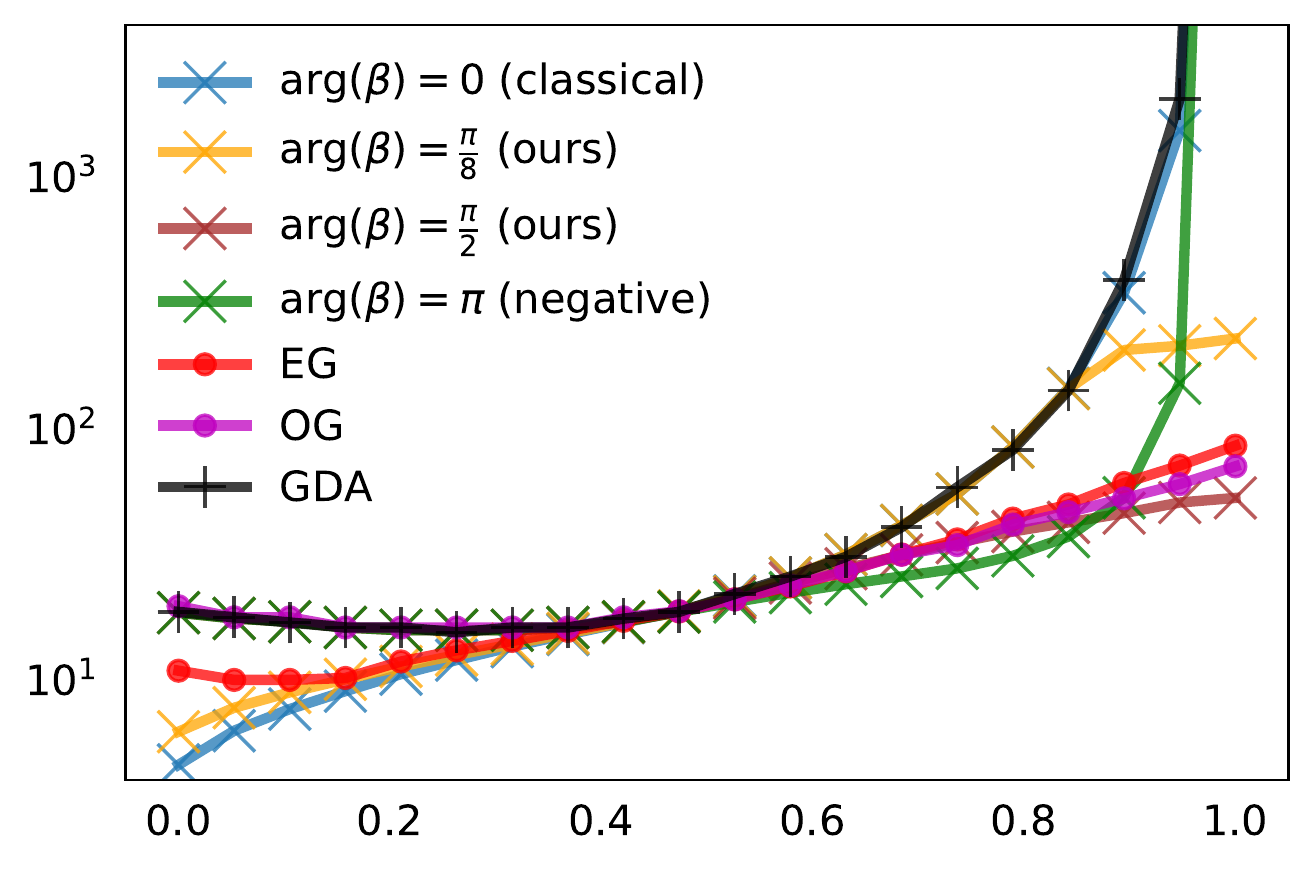}};
                    \node[left=of img1, node distance=0cm, rotate=90, xshift=2.0cm, yshift=-1.0cm, font=\color{black}] {\# grad. eval. to converge};
                    \node[below=of img1, node distance=0cm, xshift=-.1cm, yshift=1.3cm,font=\color{black}]{Max adversarialness $\gamma_{max}$};
                \end{tikzpicture}
                \vspace{-0.025\textheight}
                \caption{
                    \new{
                    We compare first-order methods convergence rates on the game in (\ref{eq:bilinear_adversarial_interpolate}), with $\boldsymbol{A} \!=\! \boldsymbol{B}_1 \!=\! \boldsymbol{B}_2$ diagonal and entries linearly spaced in $[\nicefrac{\num{1}}{\num{4}}, \num{4}]$.
                    We interpolate from purely cooperative to a mixture of purely cooperative and adversarial eigenspaces in $\spectrum(\nabla_{\bothParam}\bothGrad)$ by making $\boldsymbol{\gamma}$ diagonal with $\gamma_j \!\sim\! U[0, \gamma_{max}]$, inducing $j^{th}$ eigenvalue pair to have $\arg(\eigval_j) \!\approx\! \pm \gamma_j \frac{\pi}{2}$.
                    So, $\gamma_{max}$ controls the largest possible eigenvalue $\arg$ or \emph{max adversarialness}.
                    Every method generalizes gradient descent-ascent (GDA) by adding an optimizer parameter, tuned via grid search.
                    {\color{blue}Positive momentum} and {\color{green}negative momentum} do not converge if there are purely adversarial eigenspaces (i.e., $\gamma_{max} \!=\! 1$).
                    {\color{orange}Almost-positive momentum $\arg(\momCoeff) \!=\! \epsilon$} $>\! 0$ like ${\color{orange}\nicefrac{\pi}{8}}$ allows us to approach the acceleration of positive momentum if sufficiently cooperative (i.e., $\gamma_{max} \!<\! \num{.5}$), while still converging if there are purely adversarial eigenspaces (i.e., $\gamma_{max} \!=\! 1$).
                    Tuning $\arg(\momCoeff)$ with complex momentum performs competitively with {\color{red}extragradient (EG)}, {\color{magenta}optimistic gradient (OG)} for any adversarialness -- ex., ${\color{brown}\arg(\momCoeff) \!=\! \nicefrac{\pi}{2}}$ does well if there are purely adversarial eigenspaces (i.e., $\gamma_{max} \!=\! 1$).
                    }
                }
                \vspace{-0.00\textheight}
                \label{fig:partial_coop_phases}
            \end{wrapfigure}
            If $\boldsymbol{\gamma} \!=\! \identity$ the game is purely adversarial, while if the $\boldsymbol{\gamma} \!=\! \boldsymbol{0}$ the game is purely cooperative.
            
            Figure~\ref{fig:spectrum_vary_phase} explores $\spectrum(\dynamics)$ in purely adversarial games for a range of $\lr, \momCoeff$, generalizing Figure 4 in \citet{gidel2018negative}.
            At every non-real $\momCoeff$---i.e., $\arg(\momCoeff) \!\neq\! \pi$ or $\num{0}$---we could select $\lr, |\momCoeff|$ that converge.
            \newcommand{\spectrumVaryWidth}{0.19\linewidth} 
            \newcommand{\spectrumAxisWidth}{0.012\linewidth}
            \newcommand{\spectrumCbarWidth}{0.03\linewidth}
            \newcommand{\spectrumSpacer}{-1.45cm}
            \begin{figure}[t]
                \vspace{-0.03\textheight}
                \centering
                \begin{tikzpicture}
                    \centering
                    \node (img0){\includegraphics[trim={1.2cm 1.05cm 23.5cm .7cm},clip,width=\spectrumAxisWidth]{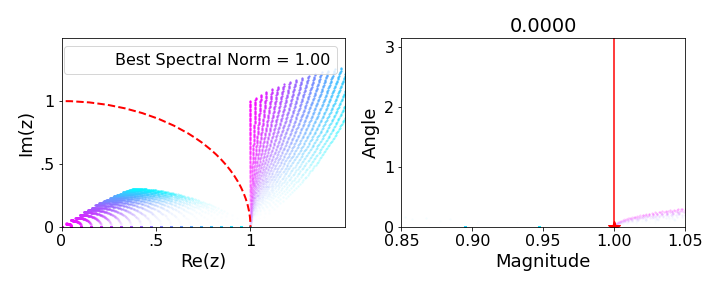}};
                    \node[left=of img1, node distance=0cm, rotate=90, xshift=1.0cm, yshift=-4cm, font=\color{black}] {$\Im(\spectrum(\dynamics))$};
                    
                    \node (img1)[right=of img0, node distance=0cm, xshift=-1.265cm]{\includegraphics[trim={1.9cm 1.25cm 12.5cm .7cm},clip,width=\spectrumVaryWidth]{images/spectrum/phase=0.png}};
                    \node[above=of img1, node distance=0cm, xshift=-0cm, yshift=-1.25cm,font=\color{black}] {$\arg(\momCoeff) = \num{0}$};
                    
                    \node (img2)[right=of img1, node distance=0cm, xshift=\spectrumSpacer]{\includegraphics[trim={1.9cm 1.25cm 12.5cm .7cm},clip,width=\spectrumVaryWidth]{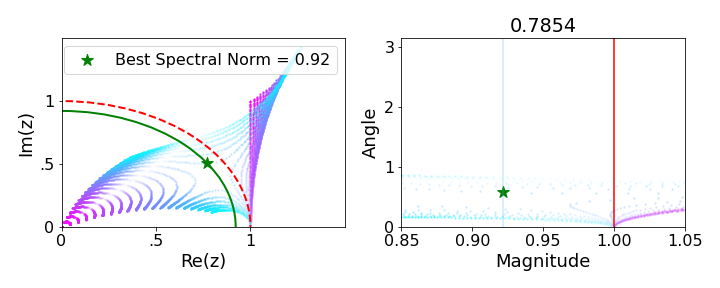}};
                    \node[above=of img2, node distance=0cm, xshift=0cm, yshift=-1.25cm,font=\color{black}] {$\arg(\momCoeff) = \frac{\pi}{\num{4}}$};
                     
                    \node (img3)[right=of img2, node distance=0cm, xshift=\spectrumSpacer]{\includegraphics[trim={1.9cm 1.25cm 12.5cm .7cm},clip,width=\spectrumVaryWidth]{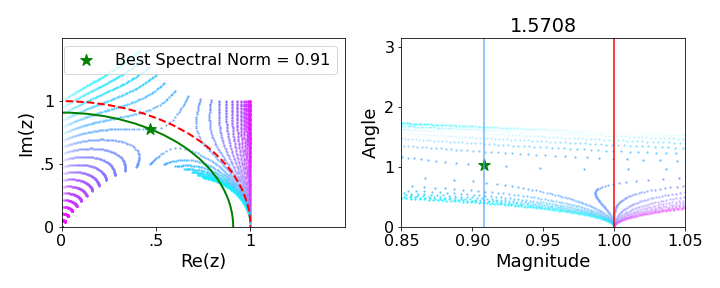}};
                    \node[below=of img3, node distance=0cm, xshift=-.1cm, yshift=1.2cm,font=\color{black}] {The real component of the spectrum of the augmented learning dynamics $\Re(\spectrum(\dynamics))$};
                    \node[above=of img3, node distance=0cm, xshift=0cm, yshift=-1.25cm,font=\color{black}] {$\arg(\momCoeff) = \frac{\pi}{\num{2}}$};
                     
                    \node (img4)[right=of img3, node distance=0cm, xshift=\spectrumSpacer]{\includegraphics[trim={1.9cm 1.25cm 12.5cm .7cm},clip,width=\spectrumVaryWidth]{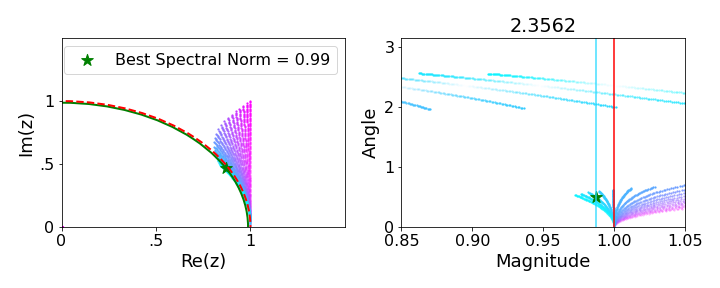}};
                    \node[above=of img4, node distance=0cm, xshift=0cm, yshift=-1.25cm,font=\color{black}] {$\arg(\momCoeff) =\ \frac{\num{3}\pi}{\num{4}}$};
                     
                    \node (img5)[right=of img4, node distance=0cm, xshift=\spectrumSpacer]{\includegraphics[trim={1.9cm 1.25cm 12.5cm .7cm},clip,width=\spectrumVaryWidth]{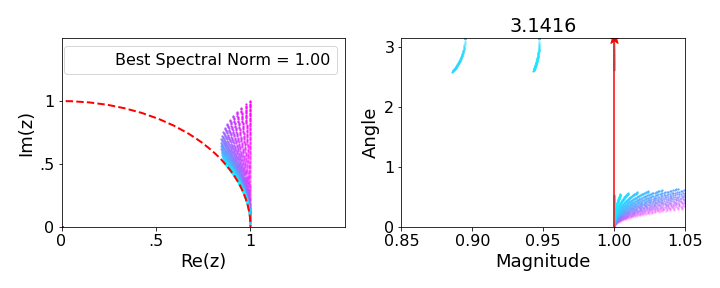}};
                    \node[above=of img5, node distance=0cm, xshift=0cm, yshift=-1.25cm,font=\color{black}] {$\arg(\momCoeff) = \pi$};
                    
                    \node (img6)[right=of img5, node distance=0cm, xshift=\spectrumSpacer]{\includegraphics[trim={.55cm .75cm .75cm .55cm},clip,width=\spectrumCbarWidth]{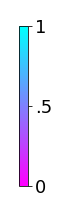}};
                    \node[right=of img6, node distance=0cm, rotate=270, xshift=0.0cm, yshift=-1.cm, font=\color{black}] {$| \momCoeff |$};
                \end{tikzpicture}
                \vspace{-0.025\textheight}
                \caption{
                    The spectrum of the augmented learning dynamics $\dynamics$ is shown, whose spectral norm is the convergence rate in Theorem~\ref{thm:theorem}.
                    Each image is a different momentum phase $\arg(\momCoeff)$ for a range of $\lr,\! |\momCoeff| \!\in\! [\num{0},\! \num{1}]$.
                    The opacity of an eigenvalue (eig) is the step size $\lr$ and the color corresponds to momentum magnitude $|\momCoeff|$.
                    A {\color{red}red} unit circle shows where all eigs must lie to converge for a fixed $\lr, \momCoeff$.
                    If the max eig norm $<\! \num{1}$, we draw a {\color{green}green} circle whose radius is our convergence rate and a {\color{green}green} star at the associated eig.
                    Notably, at every non-real $\momCoeff$ we can select $\lr,\! |\momCoeff|$ for convergence.
                    The eigs are symmetric over the $x$-axis, and eigs near $\Re(\eigval) \!=\! \num{1}$ dictate convergence rate.
                    Eigs near the center are due to state augmentation, have small magnitudes, and do not impact convergence rate.
                    Simultaneous gradient descent corresponds to the {\color{magenta}magenta} values where $|\momCoeff| \!=\! \num{0}$.
                }
                \label{fig:spectrum_vary_phase}
                \vspace{-0.0225\textheight}
            \end{figure}
            
            %
            Figure~\ref{fig:partial_coop_phases} compares first-order algorithms as we interpolate from the purely cooperative games (i.e., minimization) to mixtures of purely adversarial and cooperative eigenspaces, because this setup range can occur during GAN training -- see Figure~\ref{fig:gan_spectrum_main}.
            Our baselines are simultaneous SGD (or gradient descent-ascent (GDA)), extragradient (EG) \citep{korpelevich1976extragradient}, optimistic gradient (OG) \citep{chiang2012online, rakhlin2013optimization, daskalakis2017training}, and momentum variants.
            We added extrapolation parameters for EG and OG so they are competitive with momentum -- see Appendix Section~\ref{sec:app_spectrum}.
            We show how many gradient evaluations for a set solution distance, and EG costs two evaluations per update.
            We optimize convergence rates for each game and method by grid search, as is common for optimization parameters in deep learning.
            
            \textbf{Takeaway:} In the cooperative regime -- i.e., $\gamma_{max} \!<\! .5$ or $\max_{\eigval \in \spectrum(\nabla_{\bothParam}\bothGrad)} |\arg(\eigval)| \!<\! \nicefrac{\pi}{4}$ -- the best method is classical, positive momentum, otherwise we benefit from a method for learning in games.
            If we have purely adversarial eigenspaces then GDA, positive and negative momentum fail to converge, while EG, OG, and complex momentum can converge.
            In games like GANs, our eigendecomposition is infeasible to compute and changes during training -- see Appendix Figure~\ref{fig:gan_decomp} -- so we want an optimizer that converges robustly.
            Choosing any non-real momentum $\momCoeff$ allows robust convergence for every eigenspace mixture.
            More so, almost-positive momentum $\momCoeff$ allows us to approach acceleration when cooperative, while still converging if there are purely adversarial eigenspaces.
            %
            %
            %
            \newcommand{\strongConvexity}{\nicefrac{\num{1}}{\num{2}}}
            \newcommand{\lipschitz}{\num{2}}
            \newcommand{\spectrumGraphWidth}{.4\textwidth}
        \newcommand{\optValComplexLowDimGAN}{\num{.76}}
        \newcommand{\optValRealLowDimGAN}{\num{.79}}
        \subsection{Training GANs on $\num{2}$D Distributions}\label{sec:train_small_gan}
        \vspace{-0.0125\textheight}
            \new{
            Here, we investigate improving GAN training using alternating gradient descent updates with complex momentum.
            We look at alternating updates, because they are standard in GAN training~\citep{goodfellow2014generative, brock2018large, wu2019logan}.
            It is not clear how EG and OG generalize to alternating updates, so we use positive and negative momentum as our baselines.
            We train to generate a $\num{2}$D mixture of Gaussians, because more complicated distribution require more complicated optimizers than SGD.
            }
            Figure~\ref{fig:jax_code_change} shows all changes necessary to use the JAX momentum optimizer for our updates, with full details in Appendix~\ref{sec:app_2d_gan}.
            We evaluate the log-likelihood of GAN samples under the mixture as an imperfect proxy for matching.
            
            Appendix Figure~\ref{fig:gan_heatmaps} shows heatmaps for tuning $\arg(\momCoeff)$ and $|\momCoeff|$ with select step sizes.
            \textbf{Takeaway:}
            The best momentum was found at the almost-positive $\momCoeff \approx \num{0.7} \exp(i \nicefrac{\pi}{\num{8}})$ with step size $\lr \!=\! 0.03$, and for each $\lr$ we tested a broad range of non-real $\momCoeff$ outperformed any real $\momCoeff$.
            This suggests we may be able to often improve GAN training with alternating updates and complex momentum.
            
        \vspace{-0.0125\textheight}
        \subsection{Training BigGAN with a Complex Adam}\label{sec:train_big_gan}
        \vspace{-0.0125\textheight}
            \new{
            Here, we investigate improving larger-scale GAN training with complex momentum.
            However, larger-scale GANs train with more complicated optimizers than gradient descent -- like Adam~\citep{kingma2014adam} -- and have notoriously brittle optimization.
            We look at training BigGAN~\citep{brock2018large} on CIFAR-10~\citep{krizhevsky2009learning}, but were unable to succeed with optimizers other than \citep{brock2018large}-supplied setups, due to brittle optimization.
            So, we attempted to change procedure minimally by taking \citep{brock2018large}-supplied code \href{https://github.com/ajbrock/BigGAN-PyTorch}{{\color{blue}here}} which was trained with Adam, and making the $\momCoeff_1$ parameter -- analogous to momentum -- complex.
            The modified complex Adam is shown in Algorithm~\ref{alg:complex_adam}, where the momentum bias correction is removed to better match our theory.
            It is an open question on how to best carry over the design of Adam (or other optimizers) to the complex setting.
            Training each BigGAN took $\num{10}$ hours on an NVIDIA T4 GPU, so Figure~\ref{fig:biggan_inception_heatmap} and Table~\ref{tab:biggan} took about $\num{1000}$ and $\num{600}$ GPU hours respectively.
            }
            
            Figure~\ref{fig:biggan_inception_heatmap} shows a grid search over $\arg(\momCoeff_1)$ and $|\momCoeff_1|$ for a BigGAN trained with Algorithm~\ref{alg:complex_adam}.
            We only changed $\momCoeff_1$ for the discriminator's optimizer.
            \textbf{Takeaway:}
            The best momentum was at the almost-positive $\momCoeff_1 \!\approx\! \num{0.8} \exp(i \nicefrac{\pi}{\num{8}})$, whose samples are in Appendix Figure~\ref{fig:biggan_samples}.
            \begin{figure}[b]
                \vspace{-0.04\textheight}
                \centering
                \begin{minipage}{.31\textwidth}
                    \begin{algorithm}[H]
                        \caption{Complex Adam variant without momentum bias-correction} \label{alg:complex_adam}
                        \begin{algorithmic}[1]
                            \State $\momCoeff_1 \!\in\! \mathbb{C}, \momCoeff_2 \!\in\! [\num{0},\! \num{1})$
                            \State $\lr \!\in\! \mathbb{R}^{+}, \epsilon \!\in\! \mathbb{R}^{+}$
                            \For{$j = 1 \dots N$}
                                \State $\!\!\!\!\!\momVal^{j\!+\!1} \!\!=\! \momCoeff_1 \momVal^{j} - \gradSymbol^j$
                                \State $\!\!\!\!\!\boldsymbol{v}^{j\!+\!1} \!\!=\! \momCoeff_2 \boldsymbol{v}^{j} \!+\! (1\!-\!\momCoeff_2) (\gradSymbol^j)^{\!\num{2}}$
                                \State $\!\!\!\!\!\hat{\boldsymbol{v}}^{j\!+\!1} \!\!=\! \frac{\boldsymbol{v}^{j\!+\!1}}{1 - (\momCoeff_2)^j}$
                                \State $\!\!\!\!\!\bothParam^{j\!+\!1} \!\!=\! \bothParam^{j} + \lr \frac{\Re(\momVal^j)}{\sqrt{\hat{\boldsymbol{v}}^{j\!+\!1}} + \epsilon}$
                            \EndFor
                            \Return $\bothParam^N$
                        \end{algorithmic}
                    \end{algorithm}
                \end{minipage}
                \hspace{0.01\textwidth}
                \begin{minipage}{.65\textwidth}
                    \begin{table}[H]
                        \vspace{-0.00\textheight}
                    	\centering
                    	\begin{tabular}{lll}  
                    		CIFAR-10 BigGAN&\multicolumn{2}{c}{Best IS for \num{10} seeds} \\
                    		\cmidrule(lr){2-3}
                    		Discriminator $\momCoeff_1$ & Min & Max\\  
                    		\midrule
                    		$\num{0}$ -- \citep{brock2018large}'s default & $\num{8.9}$ & $\num{9.1}$ \\
                    		$\num{.8}\exp(i \nicefrac{\pi}{\num{8}})$ -- ours & $\num{8.96} ({\color{green}+.06})$ & $\num{9.25} ({\color{green}+.15})$ \\
                    		$\num{.8}$ & $\num{3.12}({\color{red}-5.78})$ & $\num{9.05}({\color{red}-0.05})$ \\
                    	\end{tabular}
                    	\caption{
                    	    We display the best inception scores (IS) found over $\num{10}$ runs for training BigGAN on CIFAR-10 with various optimizer settings.
                    	    We use a complex Adam variant outlined in Algorithm~\ref{alg:complex_adam}, where we only tuned $\momCoeff_1$ for the discriminator.
                    	    The best parameters found in Figure~\ref{fig:biggan_inception_heatmap} were $\momCoeff_1 = \num{.8}\exp(i \nicefrac{\pi}{\num{8}})$, which improved the min and max IS from our runs of the BigGAN authors baseline, which was the SoTA optimizer in this setting to best of our knowledge.
                    	    We tested $\momCoeff_1 = \num{.8}$ to see if the gain was solely from tuning $|\momCoeff_1|$, which occasionally failed and decreased the best IS.
                    	 }
                    	\label{tab:biggan}
                    \end{table}
                \end{minipage}
                \vspace{-0.03\textheight}
            \end{figure}
            %
            We tested the best momentum value over $\num{10}$ seeds against the author-provided baseline in Appendix Figure~\ref{fig:biggan_inception_runs}, with the results summarized in Table~\ref{tab:biggan}.
            \citep{brock2018large} reported a single inception score (IS) on CIFAR-10 of $\num{9.22}$, but the best we could reproduce over the seeds with the provided PyTorch code and settings was $\num{9.10}$.
            Complex momentum improves the best IS found with $\num{9.25} ({\color{green}+.15} \text{ over author code}, {\color{green}+.03} \text{ author reported})$.
            We trained a real momentum $|\momCoeff_1| \!=\! \num{0.8}$ to see if the improvement was solely from tuning the momentum magnitude.
            This occasionally failed to train and decreased the best IS over re-runs, showing we benefit from a non-zero $\arg(\momCoeff_1)$.

            %
    \vspace{-0.0125\textheight}
    \subsection{A Practical Initial Guess for Optimizer Parameter $\arg(\momCoeff)$}\label{sec:init_guess}
    \vspace{-0.0125\textheight}
        \new{
        Here, we propose a practical initial guess for our new hyperparameter $\arg(\momCoeff)$.
        \resultName~\ref{thm:theorem_existence} shows we can use almost-real momentum coefficients (i.e., $\arg(\momCoeff)$ is close to $\num{0}$).
        Figure~\ref{fig:partial_coop_phases} shows almost-positive $\momCoeff$ approach acceleration in cooperative eigenspaces, while converging in all eigenspaces.
        Figure~\ref{fig:gan_spectrum_main} shows GANs can have both cooperative and adversarial eigenspaces.
        Figures~\ref{fig:gan_heatmaps} and \ref{fig:biggan_inception_heatmap} do a grid search over $\arg(\momCoeff)$ for GANs, finding that almost-positive $\arg(\momCoeff) \approx \nicefrac{\pi}{\num{8}}$ works in both cases.
        Also, by minimally changing $\arg(\momCoeff)$ from $0$ to a small $\epsilon$, we can minimally change other hyperparameters in our model, which is useful to adapt existing, brittle setups like in GANs.
        Based on this, we propose an initial guess of $\arg(\momCoeff) = \epsilon$ for a small $\epsilon > 0$, where $\epsilon = \nicefrac{\pi}{\num{8}}$ worked in our GAN experiments.
        }

    \newcommand{\ytickspace}{0.042\textwidth}
    \begin{wrapfigure}[36]{r}{0.585\linewidth}
        \vspace{-0.0525\textheight}
        \centering
        \begin{tikzpicture}
            \centering
            \node (img1){\includegraphics[trim={1.3cm .5cm .9cm .475cm},clip,width=.75\linewidth]{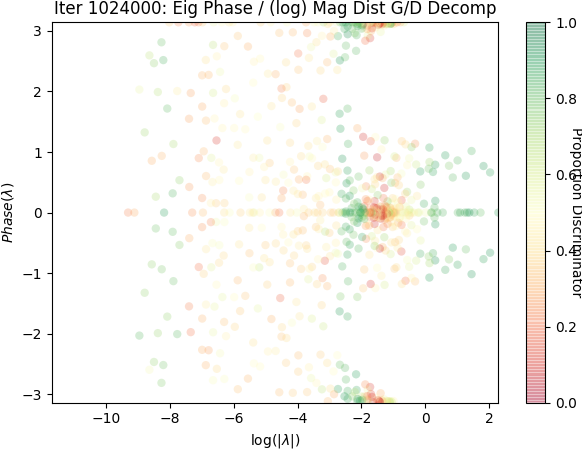}};
            \node (img1ytick)[left=of img1, node distance=0cm, rotate=90, xshift=2.67cm, yshift=-0.825cm, font=\color{black}] {-$\pi$ \hspace{\ytickspace} -$\frac{\pi}{2}$ \hspace{\ytickspace} $\num{0}$ \hspace{\ytickspace} \hphantom{-}$\frac{\pi}{2}$ \hspace{\ytickspace} \hphantom{-}$\pi$};
            \node[left=of img1ytick, node distance=0cm, rotate=90, xshift=4.6cm, yshift=-0.53cm, font=\color{black}] {Phase of eigenvalue $\arg(\eigval)$};
            
            \node[above=of img1, node distance=0cm, xshift=-.0cm, yshift=-1.2cm,font=\color{black}] {Spectrum of Jacobian of joint-grad $\spectrum(\nabla_{\bothParam}\bothGrad^{j})$ for GAN};
            \node[below=of img1, node distance=0cm, xshift=-.1cm, yshift=1.2cm,font=\color{black}] {Log-magnitude of eigenvalue $\log(|\eigval|)$};
            
            \node (img1colortick)[right=of img1, node distance=0cm, rotate=270, xshift=-2.6cm, yshift=-0.83cm, font=\color{black}] {disc. \hspace{0.07\textwidth} unsure \hspace{0.08\textwidth} gen.};
            \node[right=of img1colortick, node distance=0cm, rotate=270, xshift=-5cm, yshift=-.55cm, font=\color{black}] {Does eigenvector point at a player?};
        \end{tikzpicture}
        \vspace{-0.0275\textheight}
        \caption{
            A log-polar coordinate visualization reveals structure in the spectrum for a GAN at the end of the training on a $\num{2}$D mixture of Gaussians with a $\num{1}$-layer (disc)riminator and (gen)erator, so the joint-parameters $\bothParam \!\in\! \mathbb{R}^{\num{723}}$.
            It is difficult to see structure by graphing the Cartesian (i.e., $\Re$ and $\Im$) parts of eigenvalues, because they span orders of magnitude, while being positive and negative.
            Appendix Figure~\ref{fig:gan_decomp} shows the spectrum through training.
            \newline
            There is a mixture of many cooperative (i.e., real or $\arg(\eigval) \!\approx\! \num{0}, \pm \pi$) and some adversarial (i.e., imaginary or $\arg(\eigval) \!\approx\! \pm \frac{\pi}{\num{2}}$) eigenvalues, so -- contrary to what the name may suggest -- generative adversarial networks are not purely adversarial.
            We may benefit from optimizers leveraging this structure like complex momentum.
            \newline
            Eigenvalues are colored if the associated eigenvector is mostly in one player's part of the joint-parameter space -- see Appendix Figure~\ref{fig:gan_decomp} for details on this.
            Many eigenvectors lie mostly in the the space of (or point at) a one player.
            The structure of the set of eigenvalues for the disc. ({\color{green}green}) is different than the gen. ({\color{red}red}), but further investigation of this is an open problem.
            Notably, this may motivate separate optimizer choices for each player as in Section~\ref{sec:train_big_gan}.
        }
        \label{fig:gan_spectrum_main}
    \end{wrapfigure}
  \vspace{-0.0175\textheight}
  \section{Related Work}\label{sec:related-work}
  \vspace{-0.015\textheight}
        \textbf{Accelerated first-order methods}:
            A broad body of work exists using momentum-type methods~\citep{polyak1964some, nesterov1983method, nesterov2013introductory, maddison2018hamiltonian}, with a recent focus on deep learning~\citep{sutskever2013importance, zhang2017yellowfin, choi2019empirical, zhang2019lookahead, chen2020self}.
            But, these works focus on momentum for minimization as opposed to in games.
        
        \vspace{-0.005\textheight}
        \textbf{Learning in games}:
            Various works approximate response-gradients - some by differentiating through optimization~\citep{foerster2018learning, mescheder2017numerics, maclaurin2015gradient}.
            Multiple works try to leverage game eigenstructure during optimization~\citep{letcher2019differentiable, nagarajan2020chaos, omidshafiei2020navigating, czarnecki2020real, gidel2020minimax, perolat2020poincar}.
        
        \vspace{-0.005\textheight}   
        \textbf{First-order methods in games}:
            In some games, we can get away with using only first-order methods -- \citet{zhang2021don, zhang2020unified, ibrahim2020linear, bailey2020finite, jin2020local, azizian2019tight, nouiehed2019solving, zhang2020optimality} discuss when and how these methods work.
            \citet{gidel2018negative} is the closest work to ours, showing a negative momentum can help in some games.
            \citet{zhang2020suboptimality} note the suboptimality of negative momentum in a class of games.
            \citet{azizian2020accelerating, domingo2020average} investigate acceleration in some games.
        
        \vspace{-0.005\textheight} 
        \textbf{Bilinear zero-sum games}:
            \citet{zhangconvergence} study the convergence of gradient methods in bilinear zero-sum games.
            Their analysis extends \citet{gidel2018negative}, showing that we can achieve faster convergence by having separate step sizes and momentum for each player or tuning the extragradient step size.
            \citet{loizou2020stochastic} provide convergence guarantees for games satisfying a \emph{sufficiently bilinear} condition.
        
        \vspace{-0.005\textheight}
        \textbf{Learning in GANs}:
            Various works try to make GAN training easier with methods leveraging the game structure~\citep{Liu2020Towards, peng2020training, albuquerque2019multi, wu2019logan, hsieh2019finding}.
            \citet{metz2016unrolled} approximate the discriminator's response function by differentiating through optimization.
            \citet{mescheder2017numerics} find solutions by minimizing the norm of the players' updates.
            Both of these methods and various others~\citep{qin2020training, schafer2019implicit, jolicoeur2019connections} require higher-order information.
            \citet{daskalakis2017training, gidel2018variational, chavdarova2019reducing} look at first-order methods.
            \citet{mescheder2018training} explore problems for GAN training convergence and \citet{berard2019closer} show that GANs have significant rotations affecting learning.
        
    \vspace{-0.015\textheight}
    \section{Conclusion}\label{sec:conclusion}
    \vspace{-0.015\textheight}
        \new{
        In this paper we provided a generalization of existing momentum methods for learning in differentiable games by allowing a complex-valued momentum with real-valued updates.
        We showed that our method robustly converges in games with a different range of mixtures of cooperative and adversarial eigenspaces than current first-order methods.
        We also presented a practical generalization of our method to the Adam optimizer, which we used to improve BigGAN training.
        More generally, we highlight and lay groundwork for investigating optimizers which work well with various mixtures of cooperative and competitive dynamics in games.
        %
        }
    \new{
        \vspace{-0.0125\textheight}
        \subsection*{Societal Impact}
        \vspace{-0.0125\textheight}
            Our main contribution in this work is methodological -- specifically, a scalable algorithm for optimizing in games.
            Since our focus is on improving optimization methods, we do not expect there to be direct negative societal impacts from this contribution.
    }
    \newpage
    \clearpage
    \newpage
        
    \subsection*{Acknowledgements}
        Resources used in preparing this research were provided, in part, by the Province of Ontario, the Government of Canada through CIFAR, and companies sponsoring the Vector Institute.
        Paul Vicol was supported by an NSERC PGS-D Scholarship.
        We thank Guodong Zhang, Guojun Zhang, James Lucas, Romina Abachi, Jonah Phillion, Will Grathwohl, Jakob Foerster, Murat Erdogdu, Ken Jackson, and Ioannis Mitliagkis for feedback and helpful discussion.

    {\small
    \bibliography{references}
    }
    \clearpage
    \newpage
    \appendix
    \title{Appendix: \localTitle }

        \newpage
    
            \begin{table}[h!]
                \vspace{-0.02\textheight}
                \caption{Notation}
                \begin{center}
                    \begin{tabular}{c c}
                        \toprule
                        SGD & Stochastic Gradient Descent\\
                        CM & Complex Momentum\\
                        SGDm, SimSGDm, \dots & \dots with momentum\\
                        SimSGD, SimCM & Simultaneous \dots\\
                        AltSGD, AltCM & Alternating \dots\\
                        GAN & Generative Adversarial Network~\citep{goodfellow2014generative}\\
                        EG & Extragradient~\citep{korpelevich1976extragradient}\\
                        OG & Optimistic Gradient~\citep{daskalakis2017training}\\
                        IS & Inception Score~\citep{salimans2016improved}\\
                        $\defeq$ & Defined to be equal to\\
                        $x, y, z, \dots \in \mathbb{C}$ & Scalars\\
                        $\boldsymbol{x}, \boldsymbol{y}, \boldsymbol{z}, \dots \in \mathbb{C}^{n}$ & Vectors\\
                        $\boldsymbol{X}, \boldsymbol{Y}, \boldsymbol{Z}, \dots \in \mathbb{C}^{n \times n}$ & Matrices\\
                        $\boldsymbol{X}^\transpose$ & The transpose of matrix $\boldsymbol{X}$\\
                        $\identity$ & The identity matrix\\
                        $\Re(z), \Im(z)$ & The real or imaginary component of $z \in \mathbb{C}$\\
                        $i$ & The imaginary unit. $z \in \mathbb{C} \implies z = \Re(z) + i \Im(z)$\\
                        $\widebar{z}$ & The complex conjugate of $z \in \mathbb{C}$\\
                        $|z| \defeq \sqrt{z \widebar{z}}$ & The magnitude or modulus of $z \in \mathbb{C}$\\
                        $\arg(z)$ & The argument or phase of $z \in \mathbb{C} \implies z = |z| \exp(i\arg(z))$\\
                        $z \!\in\! \mathbb{C}$ is \emph{almost-positive}& $\arg(z) \!=\! \epsilon$ for small $\epsilon$ respectively\\
                        $\outSymbol, \inSymbol$ & A symbol for the outer/inner players\\
                        $\outDim, \inDim \in \mathbb{N}$ & The number of weights for the outer/inner players\\
                        $\paramSymbol$ & A symbol for the parameters or weights of a player\\
                        $\outParam \in \mathbb{R}^{\outDim}, \inParam \in \mathbb{R}^{\inDim}$ & The outer/inner parameters or weights\\
                        $\loss: \mathbb{R}^{n} \to \mathbb{R}$ & A symbol for a loss\\
                        $\outLoss(\outParam, \inParam), \inLoss(\outParam, \inParam)$ & The outer/inner losses -- $\mathbb{R}^{\outDim + \inDim} \mapsto \mathbb{R}$\\
                        $\outGrad(\outParam, \inParam), \inGrad(\outParam, \inParam)$ & Gradient of outer/inner losses w.r.t. their weights in $\mathbb{R}^{\outDim/\inDim}$\\
                        $\inParam^*\!(\outParam\!) \!\defeq\! \argmin\limits_{\inParam} \!\!\inLoss\!(\!\outParam \!, \!\inParam\!)$&The best-response of the inner player to the outer player\\
                        $\outLoss^*(\outParam) \!\defeq\! \outLoss\!(\!\outParam \!, \!\inParam^*(\outParam)\!)$ & The outer loss with a best-responding inner player\\
                        $\outParam^* \!\defeq\! \argmin\limits_{\outParam} \outLoss^*(\outParam\!) $ & Outer optimal weights with a best-responding inner player\\
                        $\bothDim \defeq \outDim + \inDim$ & The combined number of weights for both players\\
                        $\bothParam \defeq [\outParam, \inParam] \in \mathbb{R}^{\bothDim}$ & A concatenation of the outer/inner weights\\
                        $\bothGrad(\bothParam) \!\defeq\! [\outGrad(\bothParam), \inGrad(\bothParam)] \in \mathbb{R}^{\bothDim}$ & A concatenation of the outer/inner gradients\\
                        $\bothParam^{\num{0}} = [\outParam^{\num{0}}, \inParam^{\num{0}}] \in \mathbb{R}^{\bothDim}$ & The initial parameter values\\
                        $j$ & An iteration number\\
                        $\bothGrad^j \defeq \bothGrad(\bothParam^j) \in \mathbb{R}^{\bothDim}$ & The joint-gradient vector field at weights $\bothParam^j$\\
                        $\nabla_{\bothParam} \bothGrad^j \defeq \nabla_{\bothParam}\bothGrad |_{\bothParam^j} \in \mathbb{R}^{\bothDim \times \bothDim}$ & The Jacobian of the joint-gradient $\bothGrad$ at weights $\bothParam^j$\\
                        $\lr \in \mathbb{C}$ & The step size or learning rate\\
                        $\momCoeff \in \mathbb{C}$ & The momentum coefficient\\
                        $\momCoeff_1 \in \mathbb{C}$ & The first momentum parameter for Adam\\
                        $\momVal \in \mathbb{C}^{\bothDim}$ & The momentum buffer\\
                        $\eigval \in \mathbb{C}$ & Notation for an arbitrary eigenvalue\\
                        $\spectrum(\boldsymbol{M}) \in \mathbb{C}^{n}$ & The spectrum -- or set of eigenvalues -- of $\boldsymbol{M} \in \mathbb{R}^{n \times n}$\\
                        \emph{Purely adversarial/cooperative} game& $\spectrum(\nabla_{\bothParam} \bothGrad)$ is purely real/imaginary\\
                        $\spectralRadius(\boldsymbol{M}) \!\defeq\! \max_{z \in \spectrum(\boldsymbol{M})} |z|$ & The spectral radius in $\mathbb{R}^{+}$ of $\boldsymbol{M} \in \mathbb{R}^{n \times n}$\\
                        $\fixedPointOp([\momVal, \bothParam])$ & Fixed point op. for CM, or augmented learning dynamics\\
                        $\dynamics \defeq \nabla_{[\momVal, \bothParam]}\fixedPointOp \in \mathbb{R}^{\num{3}\bothDim \times \num{3}\bothDim}$ & Jacobian of the augmented learning dynamics in \resultName~\ref{thm:theorem}\\
                        $\lr^*\!, \momCoeff^* \!\defeq\! \argmin\limits_{\lr, \momCoeff} \spectralRadius(\dynamics(\lr,\! \momCoeff)\!)$ & The optimal step size and momentum coefficient\\
                        $\spectralRadius^* \defeq \spectralRadius(R(\lr^*, \momCoeff^*))$ & The optimal spectral radius or convergence rate\\
                        $\condition \defeq \frac{\max \spectrum(\nabla_{\bothParam} \gradSymbol)}{\min \spectrum(\nabla_{\bothParam} \gradSymbol)}$ & Condition number, for convex single-objective optimization\\
                        $\sigma_{min}^2(\boldsymbol{M}) \!\defeq\! \max \spectrum(\boldsymbol{M}^\transpose\boldsymbol{M}) $ & The minimum singular value of a matrix $\boldsymbol{M}$\\
                        \bottomrule
                    \end{tabular}
                \end{center}
                \label{tab:TableOfNotation}
            \end{table}
        
    \section{Supporting Results}
        First, some basic results about complex numbers that are used:
        \begin{equation}
            z = \Re(z) + i \Im(z) = |z| \exp(i \arg(z))
        \end{equation}
        \begin{equation}
             \widebar{z} =  \Re(z) - i \Im(z) = |z| \exp(- i \arg(z))
        \end{equation}
        \begin{equation}
            \exp(i z) + \exp(-i z) = 2 \cos(z)
        \end{equation}
        \begin{equation}
            \widebar{z_1 z_2} = \widebar{z}_1 \widebar{z}_2
        \end{equation}
        \begin{equation}
            \nicefrac{1}{2}(z + \widebar{z}) = \Re(z)
        \end{equation}
        \begin{equation}
            \Re(z_1 z_2) = \Re(z_1) \Re(z_2) - \Im(z_1) \Im(z_2)
        \end{equation}
        \begin{equation}\label{eq:complex_add}
            z_1 + z_2 = (\Re(z_1) + \Re(z_2)) + i(\Im(z_1) + \Im(z_2))
        \end{equation}
        \begin{equation}\label{eq:complex_mult}
            z_1 z_2 = (\Re(z_1) \Re(z_2) - \Im(z_1)\Im(z_2)) + i(\Im(z_1)\Re(z_2) + \Re(z_1)\Im(z_2))
        \end{equation}
        \begin{equation}\label{eq:complex_mult_polar}
            z_1 z_2 = |z_1| |z_2| \exp(i (\arg(z_1) + \arg(z_2)))
        \end{equation}
        \begin{equation}\label{eq:euler_formula}
            z^k = |z|^k \exp(i \arg(z)k) = |z|^k (\cos(k\arg(z)) + i \sin(k\arg(z))
        \end{equation}
        \new{
        This Lemma shows how we expand the complex-valued momentum buffer $\momVal$ into its Cartesian components as in (\ref{eq:cartesian_complex_update}).
        }
        \begin{lemma}\label{lemma:complex_momentum_cartesian_buffer}
            \begin{align*}
                \momVal^{j\!+\!1} &= \momCoeff \momVal^{j} - \bothGrad^j \iff\\
                \Re(\momVal^{j\!+\!1}) &= \Re(\momCoeff) \Re(\momVal^j) - \Im(\momCoeff) \Im(\momVal^j)  - \Re(\bothGrad^j), \Im(\momVal^{j\!+\!1}) = \Im(\momCoeff) \Re(\momVal^{j}) + \Re(\momCoeff) \Im(\momVal^{j}) - \Im(\bothGrad^j)
            \end{align*}
        \end{lemma}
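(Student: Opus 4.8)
The plan is to prove the equivalence by direct componentwise expansion of the complex arithmetic, since both directions follow from separating a single complex vector equation into its real and imaginary parts. Because we use Numpy-style broadcasting, the scalar--vector product $\momCoeff \momVal^{j}$ and the vector difference act componentwise, so it suffices to verify the identity entry-by-entry; this reduces everything to the scalar complex multiplication and addition rules recorded in (\ref{eq:complex_mult}) and (\ref{eq:complex_add}).

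First, for a fixed coordinate, I would expand the product $\momCoeff \momVal^{j}$ using (\ref{eq:complex_mult}), obtaining $\Re(\momCoeff \momVal^{j}) = \Re(\momCoeff)\Re(\momVal^{j}) - \Im(\momCoeff)\Im(\momVal^{j})$ and $\Im(\momCoeff \momVal^{j}) = \Im(\momCoeff)\Re(\momVal^{j}) + \Re(\momCoeff)\Im(\momVal^{j})$. Next I would subtract $\bothGrad^{j}$ using the additivity of $\Re$ and $\Im$ from (\ref{eq:complex_add}), so that $\Re(\momCoeff\momVal^{j} - \bothGrad^{j}) = \Re(\momCoeff\momVal^{j}) - \Re(\bothGrad^{j})$ and likewise for the imaginary part. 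Substituting the previous two expressions yields exactly the two stated real equations for $\Re(\momVal^{j+1})$ and $\Im(\momVal^{j+1})$, which establishes the forward ($\Rightarrow$) direction.

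For the reverse direction, I would use that a complex number is determined by its real and imaginary parts: if $\Re(\momVal^{j+1})$ and $\Im(\momVal^{j+1})$ agree with the two given real expressions, then reassembling $\momVal^{j+1} = \Re(\momVal^{j+1}) + i\,\Im(\momVal^{j+1})$ and regrouping terms recovers $\momCoeff\momVal^{j} - \bothGrad^{j}$ by running the same algebra in reverse. Hence the stated equivalence holds in both directions simultaneously, and there is no need to argue the two implications separately.

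There is essentially no serious obstacle here, as the statement is a bookkeeping identity rather than a substantive claim. The only points requiring minor care are the broadcasting convention---confirming that applying the scalar multiplication rule coordinatewise is legitimate for the scalar--vector product $\momCoeff\momVal^{j}$---and keeping the sign on the $\Im(\momCoeff)\Im(\momVal^{j})$ cross term straight when forming the real part. Once the entrywise reduction is in place, the result is immediate from (\ref{eq:complex_mult}) and (\ref{eq:complex_add}).
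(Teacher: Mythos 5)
Your proposal is correct and follows essentially the same route as the paper: the paper's proof is exactly the Cartesian expansion of $\momCoeff \momVal^{j} - \bothGrad^j$ via the product and addition rules, written as a chain of equivalences that matches real and imaginary parts, which simultaneously covers both directions you treat separately. The only cosmetic difference is that you make the componentwise-broadcasting reduction and the reverse implication explicit, whereas the paper leaves both implicit in its biconditional chain.
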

        \begin{proof}
            \begin{align*}
                \momVal^{j\!+\!1} &= \momCoeff \momVal^{j} - \bothGrad^j\\
                \iff \momVal^{j\!+\!1} &= \left(\Re(\momCoeff) + i\Im(\momCoeff) \right) \left( \Re(\momVal^{j}) + i\Im(\momVal^{j}) \right) - \left( \Re(\bothGrad^j) + i\Im(\bothGrad^j) \right)\\
                \iff \momVal^{j\!+\!1} &= \left(\Re(\momCoeff) \Re(\momVal^j) - \Im(\momCoeff) \Im(\momVal^j) \right) +\\
                &\hspace{0.25\linewidth}i \left(\Im(\momCoeff) \Re(\momVal^{j}) + \Re(\momCoeff) \Im(\momVal^{j}) \right) - \left( \Re(\bothGrad^j) + i\Im(\bothGrad^j) \right)\\
                \iff \momVal^{j\!+\!1} &= \left(\Re(\momCoeff) \Re(\momVal^j) - \Im(\momCoeff) \Im(\momVal^j)  - \Re(\bothGrad^j) \right) +\\
                &\hspace{0.25\linewidth}i \left(\Im(\momCoeff) \Re(\momVal^{j}) + \Re(\momCoeff) \Im(\momVal^{j}) - \Im(\bothGrad^j)\right)\\
                \iff \Re(\momVal^{j\!+\!1}) \!&=\! \Re(\momCoeff) \Re(\momVal^j) \!-\! \Im(\momCoeff) \Im(\momVal^j)  \!-\! \Re(\bothGrad^j), \Im(\momVal^{j\!+\!1}) \!=\! \Im(\momCoeff) \Re(\momVal^{j}) \!+\! \Re(\momCoeff) \Im(\momVal^{j}) \!-\! \Im(\bothGrad^j)\\
            \end{align*}
        \end{proof}

        We further assume $\Im(\bothGrad^j)$ is $\num{0}$ - i.e., our gradients are real-valued.
        \new{
        This Lemma shows how we can decompose the joint-parameters $\bothParam$ at the next iterate as a linear combination of the joint-parameters, joint-gradient, and Cartesian components of the momentum-buffer at the current iterate as in (\ref{eq:parameter_dynamics}).
        }
        \begin{lemma}\label{lemma:complex_momentum_cartesian_parameters}
            \begin{align*}
                \bothParam^{j\!+\!1} = \bothParam^j + \Re(\lr \momVal^{j\!+\!1})
                \iff \bothParam^{j\!+\!1} = \bothParam^j - \Re(\lr)\bothGrad^j + \Re(\lr \momCoeff) \Re(\momVal^j) - \Im(\lr \momCoeff)\Im(\momVal^j)
            \end{align*}
        \end{lemma}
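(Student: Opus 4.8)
The plan is to proceed by direct substitution, exactly mirroring the computation used to prove Lemma~\ref{lemma:complex_momentum_cartesian_buffer}. Starting from the update $\bothParam^{j\!+\!1} = \bothParam^j + \Re(\lr \momVal^{j\!+\!1})$, I would first eliminate the next-iterate buffer $\momVal^{j\!+\!1}$ using its defining recurrence $\momVal^{j\!+\!1} = \momCoeff \momVal^j - \bothGrad^j$, giving $\lr \momVal^{j\!+\!1} = \lr \momCoeff \momVal^j - \lr \bothGrad^j$. Taking real parts and using additivity of $\Re(\cdot)$ (i.e., Eq.~(\ref{eq:complex_add})) then splits the update into $\Re(\lr \momCoeff \momVal^j) - \Re(\lr \bothGrad^j)$.

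The second step handles each of these two terms separately. For $\Re(\lr \bothGrad^j)$, I would invoke the standing assumption that the joint-gradient is real-valued, $\Im(\bothGrad^j) = 0$, so that $\Re(\lr \bothGrad^j) = \Re(\lr)\bothGrad^j$. For the remaining term $\Re(\lr \momCoeff \momVal^j)$, the key move is to group the product as $(\lr \momCoeff)\cdot \momVal^j$ and apply the real-part-of-a-product identity $\Re(z_1 z_2) = \Re(z_1)\Re(z_2) - \Im(z_1)\Im(z_2)$ with $z_1 = \lr \momCoeff$ and $z_2 = \momVal^j$; this yields $\Re(\lr \momCoeff)\Re(\momVal^j) - \Im(\lr \momCoeff)\Im(\momVal^j)$. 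Assembling the pieces gives $\bothParam^{j\!+\!1} = \bothParam^j - \Re(\lr)\bothGrad^j + \Re(\lr \momCoeff)\Re(\momVal^j) - \Im(\lr \momCoeff)\Im(\momVal^j)$, and since every manipulation is a reversible equivalence, the stated ``$\iff$'' follows.

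I do not expect a genuine obstacle here; the result is a routine rearrangement. The only point requiring any care is the choice of grouping in the product rule: one must treat $\lr \momCoeff$ as a \emph{single} complex scalar rather than expanding $\lr$ and $\momCoeff$ separately, since the statement is phrased in terms of $\Re(\lr\momCoeff)$ and $\Im(\lr\momCoeff)$. In the paper's actual setting $\lr$ is real, so that $\Re(\lr\momCoeff) = \lr\Re(\momCoeff)$ and $\Im(\lr\momCoeff) = \lr\Im(\momCoeff)$, but the computation above never uses this and remains valid for complex $\lr$ as well, which is why the lemma is stated in the more general form feeding directly into the fixed-point operator $\fixedPointOp$ and the Jacobian $\dynamics$ of Theorem~\ref{thm:theorem}.
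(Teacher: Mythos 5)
Your proof is correct and takes essentially the same approach as the paper's: both are direct computations relying on additivity of $\Re(\cdot)$, the product identity $\Re(z_1 z_2) = \Re(z_1)\Re(z_2) - \Im(z_1)\Im(z_2)$, and the standing assumption $\Im(\bothGrad^j) = 0$. The only cosmetic difference is ordering---you substitute the buffer recurrence first and treat $\lr\momCoeff$ as a single scalar, whereas the paper first expands $\Re(\lr \momVal^{j\!+\!1})$ using the Cartesian components of $\momVal^{j\!+\!1}$ from Lemma~\ref{lemma:complex_momentum_cartesian_buffer} and then re-collects the coefficients into $\Re(\lr\momCoeff)$ and $\Im(\lr\momCoeff)$.
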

        \begin{proof}
            \begin{align*}
                & \Re(\lr \momVal^{j\!+\!1})\\
                =& \left(\Re(\lr) \Re(\momVal^{j\!+\!1}) - \Im(\lr) \Im(\momVal^{j\!+\!1})\right)\\
                =& \left(\Re(\lr) \left(\Re(\momCoeff) \Re(\momVal^j) - \Im(\momCoeff) \Im(\momVal^j) - \bothGrad^j\right) - \Im(\lr) \left( \Im(\momCoeff) \Re(\momVal^{j}) + \Re(\momCoeff) \Im(\momVal^{j}) \right)\right)\\
                =& -\Re(\lr)\bothGrad^j +  \left(\Re(\lr) \left(\Re(\momCoeff) \Re(\momVal^j) - \Im(\momCoeff) \Im(\momVal^j) \right) - \Im(\lr) \left( \Im(\momCoeff) \Re(\momVal^{j}) + \Re(\momCoeff) \Im(\momVal^{j}) \right)\right)\\
                =& -\Re(\lr)\bothGrad^j +  \left(\Re(\lr)\Re(\momCoeff) - \Im(\lr)\Im(\momCoeff) \right)\Re(\momVal^j) - \left(\Re(\lr) \Im(\momCoeff) + \Im(\lr)\Re(\momCoeff)\right)\Im(\momVal^j)\\
                =& -\Re(\lr)\bothGrad^j +  \Re(\lr \momCoeff) \Re(\momVal^j) - \Im(\lr \momCoeff)\Im(\momVal^j)
            \end{align*}
            Thus,
            \begin{align*}
                \bothParam^{j\!+\!1} = \bothParam^j + \Re(\lr \momVal^{j\!+\!1})
                \iff \bothParam^{j\!+\!1} = \bothParam^j - \Re(\lr)\bothGrad^j + \Re(\lr \momCoeff) \Re(\momVal^j) - \Im(\lr \momCoeff)\Im(\momVal^j)
            \end{align*}
        \end{proof}
        
        \subsection{Theorem~\ref{thm:theorem} Proof Sketch}\label{pf:thm1}
            \mainThm*
            \begin{proof}
                We reproduce the proof for a simpler case of quadratic games, which is simple case of \citet{polyak1964some}'s well-known method for analyzing the convergence of iterative methods.
                \citet{bertsekas2008nonlinear} generalizes this result from quadratic games to when we are sufficiently close to any stationary point.
                
                For quadratic games, we have that $\bothGrad^j = \left(\nabla_{\bothParam} \bothGrad\right)^\transpose \bothParam^j$.
            	Well, by Lemma~\ref{lemma:complex_momentum_cartesian_buffer} and Lemma~\ref{lemma:complex_momentum_cartesian_parameters} we have:
            	\begin{equation}
            	    \begin{pmatrix}
            	        \Re(\momVal^{j\!+\!1})\\
            	        \Im(\momVal^{j\!+\!1})\\
            	        \bothParam^{j\!+\!1}
            	    \end{pmatrix}
            	    = \dynamics 
            	    \begin{pmatrix}
            	        \Re(\momVal^{j})\\
            	        \Im(\momVal^{j})\\
            	        \bothParam^{j}
            	    \end{pmatrix}
            	\end{equation}
            	By telescoping the recurrence for the $j^{th}$ augmented parameters: 
            	\begin{equation}
            	    \begin{pmatrix}
            	        \Re(\momVal^{j})\\
            	        \Im(\momVal^{j})\\
            	        \bothParam^{j}
            	    \end{pmatrix}
            	    = \dynamics^j
            	    \begin{pmatrix}
            	        \Re(\momVal^{0})\\
            	        \Im(\momVal^{0})\\
            	        \bothParam^{0}
            	    \end{pmatrix}
            	\end{equation}
            	We can compare $\momVal^j$ with the value it converges to $\momVal^*$ which exists if $\dynamics$ is contractive.
            	We do the same with $\bothParam$.
            	Because $\momVal^* = \dynamics \momVal^* = \dynamics^j \momVal^*$:
            	\begin{equation}
            	    \begin{pmatrix}
            	        \Re(\momVal^{j}) - \Re(\momVal^{*})\\
            	        \Im(\momVal^{j}) - \Im(\momVal^{*})\\
            	        \bothParam^{j} - \bothParam^{*}
            	    \end{pmatrix}
            	    = \dynamics^j
            	    \begin{pmatrix}
            	        \Re(\momVal^{0})  - \Re(\momVal^{*})\\
            	        \Im(\momVal^{0})  - \Im(\momVal^{*})\\
            	        \bothParam^{0} - \bothParam^{*}
            	    \end{pmatrix}
            	\end{equation}
            	By taking norms:
            	\begin{align}
            	    \left\|
                	    \begin{pmatrix}
                	        \Re(\momVal^{j}) - \Re(\momVal^{*})\\
                	        \Im(\momVal^{j}) - \Im(\momVal^{*})\\
                	        \bothParam^{j} - \bothParam^{*}
                	    \end{pmatrix}
            	    \right\|_2
            	    &= 
            	    \left\|
                	    \dynamics^j
                	    \begin{pmatrix}
                	        \Re(\momVal^{0})  - \Re(\momVal^{*})\\
                	        \Im(\momVal^{0})  - \Im(\momVal^{*})\\
                	        \bothParam^{0} - \bothParam^{*}
                	    \end{pmatrix}
            	    \right\|_2
            	    \\
            	    \implies
            	    \left\|
                	    \begin{pmatrix}
                	        \Re(\momVal^{j}) - \Re(\momVal^{*})\\
                	        \Im(\momVal^{j}) - \Im(\momVal^{*})\\
                	        \bothParam^{j} - \bothParam^{*}
                	    \end{pmatrix}
            	    \right\|_2
            	    &\leq
            	    \left\|
                	    \dynamics^j
                	\right\|_2
                	\left\|
                	    \begin{pmatrix}
                	        \Re(\momVal^{0})  - \Re(\momVal^{*})\\
                	        \Im(\momVal^{0})  - \Im(\momVal^{*})\\
                	        \bothParam^{0} - \bothParam^{*}
                	    \end{pmatrix}
            	    \right\|_2
            	\end{align}
            	With Lemma 11 from \citet{foucart2012}, we have there exists a matrix norm $\forall \epsilon > 0$ such that:
            	\begin{equation}\label{eq:foucart}
            	    \| \dynamics^j \| \leq \left(\spectralRadius \left(\dynamics \right) + \epsilon\right)^j
            	\end{equation}
            	We also have an equivalence of norms in finite-dimensional spaces.
            	So for all norms $\| \cdot \|$, $\exists C \geq B > 0$ such that:
            	\begin{equation}\label{eq:norm_eq}
            	    B \| \dynamics^j \| \leq \| \dynamics^j \|_2 \leq C \| \dynamics^j \|
            	\end{equation}
            	Combining (\ref{eq:foucart}) and (\ref{eq:norm_eq}) we have:
            	\begin{equation}
            	    \left\|
                	    \begin{pmatrix}
                	        \Re(\momVal^{j}) - \Re(\momVal^{*})\\
                	        \Im(\momVal^{j}) - \Im(\momVal^{*})\\
                	        \bothParam^{j} - \bothParam^{*}
                	    \end{pmatrix}
            	    \right\|_2
            	    \leq
            	    C \left(\spectralRadius \left(\dynamics \right) + \epsilon\right)^j
                	\left\|
                	    \begin{pmatrix}
                	        \Re(\momVal^{0})  - \Re(\momVal^{*})\\
                	        \Im(\momVal^{0})  - \Im(\momVal^{*})\\
                	        \bothParam^{0} - \bothParam^{*}
                	    \end{pmatrix}
            	    \right\|_2
            	\end{equation}
            	So, we have:
            	\begin{equation}
            	    \left\|
                	    \begin{pmatrix}
                	        \Re(\momVal^{j}) - \Re(\momVal^{*})\\
                	        \Im(\momVal^{j}) - \Im(\momVal^{*})\\
                	        \bothParam^{j} - \bothParam^{*}
                	    \end{pmatrix}
            	    \right\|_2
            	    = \mathcal{O}((\spectralRadius(\dynamics) + \epsilon)^j)
            	\end{equation}
            	Thus, we converge linearly with a rate of $\mathcal{O}(\spectralRadius(\dynamics) + \epsilon)$.
            \end{proof}
        \subsection{Characterizing the Augmented Dynamics Eigenvalues}\label{sec:poly}
            Here, we present polynomials whose roots are the eigenvalues of our the Jacobian of our augmented dynamics $\spectrum(\dynamics)$, given the eigenvalues of the Jacobian of the joint-gradient vector field $\spectrum(\nabla_{\bothParam} \bothGrad)$.
            We use a similar decomposition as \citet{gidel2018negative}.
            
            We can expand $\nabla_{\bothParam} \bothGrad = P T P^{-1}$ where $T$ is an upper-triangular matrix and $\eigval_i$ is an eigenvalue of $\nabla_{\bothParam} \bothGrad$.
            \begin{equation}
                T = \begin{bmatrix}
                    \eigval_1 & * & \dots & *\\
                    \num{0}  & \dots & \dots & \dots\\
                    \dots & \dots & \dots & *\\
                    \num{0} & \dots & \num{0} & \eigval_{\bothDim}\\
                    \end{bmatrix}
            \end{equation}
            We then break up into components for each eigenvalue, giving us submatrices $\mathbf{R}_k \in \mathbb{C}^{\num{3} \times \num{3}}$:
            \begin{equation}\label{eq:R-decomp}
        	    \dynamics_{k}  \defeq \begin{bmatrix}
                    \Re(\momCoeff) & -\Im(\momCoeff) & -\eigval_k\\
                    \Im(\momCoeff) & \Re(\momCoeff) & 0\\
                    \Re(\lr \momCoeff) & -\Im(\lr \momCoeff) & 1 - \Re(\lr)\eigval_k\\
                    \end{bmatrix}
        	\end{equation}
        	We can get the characteristic polynomial of $\dynamics_k$ with the following Mathematica command, where we use substitute the symbols $r + i u = \eigval_k$, $a = \Re(\momCoeff)$, $b = \Im(\momCoeff)$, $c = \Re(\lr)$, and $d = \Im(\lr)$.
        	
        	\texttt{CharacteristicPolynomial[\{\{a, -b, -(r + u I)\}, \{b, a, 0\}, \{a c - b d, -(b c + a d), 1 - c (r + u I)\}\}, x]}
        	
        	The command gives us the polynomial associated with eigenvalue $\eigval_k = r + i u$:
        	\begin{equation}\label{eq:R-poly}
        	    p_k(x) = -a^2 x + a^2 + a c r x + i a c u x + 2 a x^2 - 2 a x - b^2 x + b^2 + b d r x + i b d u x - c r x^2 - i c u x^2 - x^3 + x^2
        	\end{equation}
        	Consider the case where $\eigval_k$ is imaginary -- i.e, $r = 0$ -- which is true in all purely adversarial and bilinear zero-sum games.
        	Then (\ref{eq:R-poly}) simplifies to:
        	\begin{equation}\label{eq:R-poly-imEig}
        	    p_k(x) = -a^2 x + a^2 + i a c u x + 2 a x^2 - 2 a x - b^2 x + b^2 + i b d u x - i c u x^2 - x^3 + x^2
        	\end{equation}
        	Our complex $\eigval_k$ come in conjugate pairs where $\eigval_k = u_k i$ and $\bar{\eigval}_k = - u_k i$.
        	(\ref{eq:R-poly-imEig}) has the same roots for $\eigval_k$ and $\bar{\eigval}_k$, which can be verified by writing the roots with the cubic formula.
        	This corresponds to spiraling around the solution in either a clockwise or counterclockwise direction.
        	Thus, we restrict to analyzing $\eigval_k$ where $u_k$ is positive without loss of generality.
        	
        	If we make the step size $\lr$ real -- i.e., $d = 0$ -- then (\ref{eq:R-poly-imEig}) simplifies to:
        	\begin{equation}\label{eq:R-poly-imEig-realLr}
        	    p_k(x) = x (-a^2 + i a c u - 2 a - b^2) + a^2 + x^2 (2 a - i c u + 1) + b^2 - x^3
        	\end{equation}
        	Using a heuristic from single-objective optimization, we look at making step size proportional to the inverse of the magnitude of eigenvalue $k$ -- i.e., $\lr_k = \frac{\lr'}{|\eigval_k|} = \frac{\lr'}{u_k}$.
        	With this, (\ref{eq:R-poly-imEig-realLr}) simplifies to:
        	\begin{equation}\label{eq:R-poly-imEig-realLr-prop}
        	    p_k(x) = x (-a^2 + i a \lr' - 2 a - b^2) + a^2 + x^2 (2 a - i \lr' + 1) + b^2 - x^3
        	\end{equation}
        	Notably, in (\ref{eq:R-poly-imEig-realLr-prop}) there is no dependence on the components of imaginary eigenvalue $\eigval_k = r + i u = \num{0} + i u$, by selecting a $\lr$ that is proportional to the eigenvalues inverse magnitude.
        	We can simplify further with $a^2 + b^2 = |\momCoeff|^2$:
        	\begin{equation}\label{eq:R-poly-imEig-realLr-prop-s1}
        	    p_k(x) = x (\Re(\momCoeff) (i \lr' - 2) - |\momCoeff|^2) + x^2 (2 \Re(\momCoeff) - i \lr' + 1) + |\momCoeff|^2 - x^3
        	\end{equation}
        	We could expand this in polar form for $\momCoeff$ by noting $\Re(\momCoeff) = |\momCoeff| \cos(\arg(\momCoeff))$:
        	\begin{equation}\label{eq:R-poly-imEig-realLr-prop-s2}
        	    p_k(x) = x (|\momCoeff| \cos(\arg(\momCoeff))(i  \lr' - 2) - |\momCoeff|^2) + x^2 (2 |\momCoeff| \cos(\arg(\momCoeff)) - i \lr' + 1) + |\momCoeff|^2 - x^3
        	\end{equation}
        	We can simplify further by considering an imaginary $\momCoeff$ -- i.e., $\Re(\momCoeff) = 0$ or $\cos(\arg(\momCoeff)) = 0$:
        	\begin{equation}\label{eq:R-poly-imEig-realLr-prop-s3}
        	    p_k(x) = |\momCoeff|^2 - x |\momCoeff|^2 - x^2 (i \lr' - 1)  - x^3
        	\end{equation}

        	
        	
        	The roots of these polynomials can be trivially evaluated numerically or symbolically with the by plugging in $\momCoeff, \lr,$ and $\eigval_k$ then using the cubic formula.
        	This section can be easily modified for the eigenvalues of the augmented dynamics for variants of complex momentum by defining the appropriate $\dynamics$ and modifying the Mathematica command to get the characteristic polynomial for each component, which can be evaluated if it is a sufficiently low degree using known formulas.
        	
        \subsection{Convergence Bounds}\label{sec:thm2}

            \newcommand{\pureImaginaryMomentumMag}{\num{.7}}
            \newcommand{\pureImaginaryMomentumLR}{\num{.7}}
            \newcommand{\realRate}{\num{.949}}
            \newcommand{\almostNegMomentumMag}{\num{.986}}
            \newcommand{\almostNegMomentumLR}{\num{.75}}
            \newcommand{\almostNegRate}{\num{0.9998}}
            \newcommand{\almostPosMomentumMag}{\num{.9}}
            \newcommand{\almostPosMomentumLR}{\num{.025}}
            \newcommand{\almostPosRate}{\num{0.973}}
                \resultExist*
                \begin{proof}
                    \new{
                    Note that Theorem~\ref{thm:theorem} bounds the convergence rate of Algorithm~\ref{alg:simultaneous_complex} by $\spectrum(\dynamics)$.
                    Also, (\ref{eq:R-poly-imEig-realLr}) gives a formula for 3 eigenvalues in $\spectrum(\dynamics)$ given $\lr, \momCoeff,$ and an eigenvalue $\eigval \in \spectrum(\nabla_{\bothParam}\bothGrad)$.
                    The formula works by giving outputting a cubic polynomial whose roots are eigenvalues of $\spectrum(\dynamics)$, which can be trivially evaluated with the cubic formula.
                    
                    We denote the $k^{th}$ eigenspace of $\spectrum(\nabla_{\bothParam}\bothGrad)$ with eigenvalue $\eigval_k = i c_k$ and $|c_1| \leq \dots \leq |c_n|$, because bilinear zero-sum games have purely imaginary eigenvalues due to $\nabla_{\bothParam}\bothGrad$ being antisymmetric.
                    Eigenvalues come in a conjugate pairs, where $\bar{\eigval_k} = i (-c_k)$
                    
                    If we select momentum coefficient $\momCoeff = |\momCoeff| \exp(i \arg(\momCoeff))$ and step size $\lr_k = \frac{\lr_k'}{|c_k|}$, and use that $ \eigval \in \spectrum(\nabla_{\bothParam}\bothGrad)$ are imaginary, then -- as shown in Appendix Section~\ref{sec:poly} -- (\ref{eq:R-poly-imEig-realLr}) simplifies to:
                	\begin{equation}\label{eq:R-poly-imEig-realLr-prop-pf}
                	    p_k(x) = x (|\momCoeff| \cos(\arg(\momCoeff))(i  \lr_k' - 2) - |\momCoeff|^2) + x^2 (2 |\momCoeff| \cos(\arg(\momCoeff)) - i \lr_k' + 1) + |\momCoeff|^2 - x^3
                	\end{equation}
                	So, with these parameter selections, the convergence rate of Algorithm~\ref{alg:simultaneous_complex} in the $k^{th}$ eigenspace is bounded by the largest root of (\ref{eq:R-poly-imEig-realLr-prop-pf}).
                	
                	First, consider $\arg(\momCoeff) = \pi - \epsilon$, where $\epsilon = \frac{\pi}{16}$.
                	We select $\lr_k' = \almostNegMomentumLR$ (equivalently, $\lr_k = \frac{\almostNegMomentumLR}{|c_k|}$) and $| \momCoeff | = \almostNegMomentumMag$ via grid search.
                    Using the cubic formula on the associated $p(x)$ from (\ref{eq:R-poly-imEig-realLr-prop-pf}) the maximum magnitude root has size $ \approx \almostNegRate < 1$, so this selection converges in the $k^{th}$ eigenspace.
                    So, selecting:
                    \begin{align}
                        \hat{\lr} &\leq \min_k \lr_k\\
                        &= \min_k \frac{\almostNegMomentumLR}{c_k}\\
                        &= \frac{\almostNegMomentumLR}{\max_k c_k}\\
                        &= \frac{\almostNegMomentumLR}{\|\nabla_{\bothParam}\bothGrad\|_2}
                    \end{align}
                     with $\momCoeff = \almostNegMomentumMag \exp(i (\pi - \epsilon))$ will converge in each eigenspace.
                    
                    Now, consider $\arg(\momCoeff) = \epsilon = \frac{\pi}{16}$ with $\lr_k' = \almostPosMomentumLR$ and $| \momCoeff | = \almostPosMomentumMag$.
                    Using the cubic formula on the associated $p(x)$ from (\ref{eq:R-poly-imEig-realLr-prop-pf}) the maximum magnitude root has size $ \approx \almostPosRate < 1$, so this selection converges in the $k^{th}$ eigenspace.
                    So, selecting:
                    \begin{align}
                        \hat{\lr} &\leq \min_k \lr_k\\
                        &= \min_k \frac{\almostPosMomentumLR}{c_k}\\
                        &= \frac{\almostPosMomentumLR}{\max_k c_k}\\
                        &= \frac{\almostPosMomentumLR}{\|\nabla_{\bothParam}\bothGrad\|_2}
                    \end{align}
                    with $\momCoeff = \almostPosMomentumMag \exp(i \epsilon)$ will converge in each eigenspace.
                    
                    Thus, for any of the choices of $\arg(\momCoeff)$ we can select $\hat{\lr}, |\momCoeff|$ that converges in every eigenspace, and thus converges.

                    }
                \end{proof}
                In the preceding proof, our prescribed selection of $\hat{\lr}$ depends on knowing the largest norm eigenvalue of $\spectrum(\nabla_{\bothParam}\bothGrad)$, because our selections of $\hat{\lr} \,\,\propto\,\, \frac{1}{\|\nabla_{\bothParam}\bothGrad\|_2}$.
                We may not have access to largest norm eigenvalue of $\spectrum(\nabla_{\bothParam}\bothGrad)$ in-practice.
                Nonetheless, this shows that a parameter selection exists to converge, even if it may be difficult to find.
                Often, in convex optimization we describe choices of $\lr, \momCoeff$ in terms of the largest and smallest norm eigenvalues of $\spectrum(\nabla_{\bothParam}\bothGrad)$ (i.e. the Hessian of the loss)~\citep{boyd2004convex}.

    \section{Algorithms}
        Here, we include additional algorithms, which may be of use to some readers.
        Algorithm~\ref{alg:aggmo} show aggregated momentum~\citep{lucas2018aggregated}.
        Algorithm~\ref{alg:recurrent_momentum} shows the recurrently linked momentum that generalizes and unifies aggregated momentum with negative momentum~\citep{gidel2018negative}.
        Algorithm~\ref{alg:alternating_complex} shows our algorithm with alternating updates, which we use for training GANs.
        Algorithm~\ref{alg:simultaneous_expanded} shows our method with all real-valued objects, if one wants to implement complex momentum in a library that does not support complex arithmetic.
        \begin{figure}[h!]
            \vspace{-0.02\textheight}
            \centering
            \begin{minipage}{0.4\textwidth}
                \begin{algorithm}[H]
                    \caption{Aggregated Momentum} \label{alg:aggmo}
                    \begin{algorithmic}[1]
                        \State Select number of buffers $K \in \mathbb{N}$
                        \State Select $\momCoeff_{(k)} \in [0, 1)$ for $k = 1 \dots K$
                        \State Select $\lr_{(k)} \in \mathbb{R}^{+}$ for $k = 1 \dots K$
                        \State Initialize $\momVal_{(k)}^{0}$ for $k = 1 \dots K$
                        \For{$j = \num{1} \dots N$}
                            \For{$k = \num{1} \dots K$}
                                \State $\momVal_{(k)}^{j\!+\!\num{1}} = \momCoeff_{(k)} \momVal_{(k)}^{j} - \bothGrad^{j}$
                            \EndFor
                            \State $\bothParam^{j\!+\!\num{1}} = \bothParam^j + \sum_{k=1}^{K}  \lr_{(k)} \momVal ^{j\!+\!1}_{(k)}$
                        \EndFor
                        \Return $\bothParam_N$
                    \end{algorithmic}
                \end{algorithm}
            \end{minipage}
            \hspace{0.01\textwidth}
            \begin{minipage}{0.55\textwidth}
                \begin{algorithm}[H]
                    \caption{Recurrently Linked Momentum} \label{alg:recurrent_momentum}
                    \begin{algorithmic}[1]
                        \State Select number of buffers $K \in \mathbb{N}$
                        \State Select $\momCoeff_{(l,k)} \in \mathbb{R}$ for $l=1 \dots K$ and $k = 1 \dots K$
                        \State Select $\lr_{(k)} \in \mathbb{R}^{+}$ for $k = 1 \dots K$
                        \State Initialize $\momVal_{(k)}^{0}$ for $k = 1 \dots K$
                        \For{$j = \num{1} \dots N$}
                            \For{$k = \num{1} \dots K$}
                                \State $\momVal_{(k)}^{j+1} \!=\! \sum_l \momCoeff_{(l, k)}\momVal_{(l)}^{j} - \bothGrad^{j}$
                            \EndFor
                            \State $\bothParam^{j\!+\!\num{1}} = \bothParam^j + \sum_{k=1}^{K} \lr_{(k)} \momVal ^{j\!+\!1}_{(k)}$
                        \EndFor
                        \Return $\bothParam_N$
                    \end{algorithmic}
                \end{algorithm}
            \end{minipage}
        \end{figure}
        
        \begin{figure}[h!]
            \vspace{-0.04\textheight}
            \centering
            \begin{minipage}{0.4\textwidth}
                \begin{algorithm}[H]
                    \caption{(\ref{eq:alt_update}) Momentum} \label{alg:alternating_complex}
                    \begin{algorithmic}[1]
                        \State Select $\momCoeff \in \mathbb{C}, \lr \in \mathbb{R}^{+}$
                        \State Initialize $\momVal_{\outSymbol}^{0}, \momVal_{\inSymbol}^{0}$
                        \For{$j = \num{1} \dots N$}
                            \State $\momVal_{\outSymbol}^{j\!+\!\num{1}} = \momCoeff \momVal_{\outSymbol}^{j} - \outGrad^j$
                            \State $\outParam^{j\!+\!\num{1}} = \outParam^j + \Re(\lr \momVal^{j\!+\!\num{1}}_{\outSymbol})$
                            \State $\momVal_{\inSymbol}^{j\!+\!\num{1}} = \momCoeff \momVal_{\inSymbol}^{j} - \inGrad(\outParam^{j\!+\!1}\!\!\!, \inParam^{j})$
                            \State $\inParam^{j\!+\!\num{1}} = \inParam^j + \Re(\lr \momVal^{j\!+\!1}_{\inSymbol})$
                        \EndFor
                        \Return $\bothParam_N$
                    \end{algorithmic}
                \end{algorithm}
            \end{minipage}
            %
            %
            %
            \hspace{0.01\textwidth}
            \begin{minipage}{0.55\textwidth}
                \centering
                \begin{algorithm}[H]
                    \caption{(\ref{eq:simul_update}) Complex Momentum - $\mathbb{R}$ valued} \label{alg:simultaneous_expanded}
                    \begin{algorithmic}[1]
                        \State Select $\Re(\momCoeff), \Im(\momCoeff), \Re(\lr), \Im(\lr) \in \mathbb{R}$
                        \State Select $\Re(\momCoeff), \Im(\momCoeff), \Re(\lr), \Im(\lr) \in \mathbb{R}$
                        \State Initialize $\Re(\momVal)^{0}, \Im(\momVal)^{0}$
                        \For{$j = 1 \dots N$}
                            \State $\Re(\momVal^{j\!+\!1}) = \Re(\momCoeff) \Re(\momVal^j) - \Im(\momCoeff)\Im(\momVal^j) - \bothGrad^j$
                            \State $\Im(\momVal^{j\!+\!1}) = \Re(\momCoeff) \Im(\momVal^j) + \Im(\momCoeff)\Re(\momVal^j)$
                            \State $\bothParam^{j\!+\!1} \!\!=\! \bothParam^{j} \!-\! \Re(\lr\!)\! \bothGrad^j \!+\! \Re(\lr \momCoeff\!)\!\Re(\momVal^{j}\!) \!-\! \Im(\lr \momCoeff\!)\!\Im(\momVal^{j}\!)$
                        \EndFor
                        \Return $\bothParam_N$
                    \end{algorithmic}
                \end{algorithm}
            \end{minipage}
            \vspace{-0.02\textheight}
        \end{figure}
    \subsection{Complex Momentum in PyTorch}
        Our method can be easily implemented in PyTorch 1.6+ by using complex tensors.
        The only necessary change to the SGD with momentum optimizer is extracting the real-component from momentum buffer as with JAX -- see \href{https://pytorch.org/docs/stable/_modules/torch/optim/sgd.html#SGD}{{\color{blue}here}}.
        
        In older versions of Pytorch, we can use a tensor to represent the momentum buffer $\momVal$, step size $\lr$, and momentum coefficient $\momCoeff$.
        Specifically, we represent the real and imaginary components of the complex number independently.
        Then, we redefine the operations \texttt{\_\_add\_\_} and \texttt{\_\_mult\_\_} to satisfy the rules of complex arithmetic -- i.e., equations (\ref{eq:complex_add}) and (\ref{eq:complex_mult}).
    \vspace{-0.01\textheight}
    \section{Experiments}\label{app:experiments}
    \vspace{-0.01\textheight}
        \newcommand{\bilinearTime}{}
        \newcommand{\supersmallGANTime}{}
        \newcommand{\spectrumTime}{}
        \newcommand{\smallGANTime}{2}
        \newcommand{\bigGANTime}{10}
        \subsection{Computing Infrastructure and Runtime}
            For the purely adversarial experiments in Sections \ref{sec:single_param_det} and \ref{sec:exp_spectrum}, we do our computing in CPU.
            Training each $\num{2}$D GAN in Section~\ref{sec:train_small_gan} takes $\smallGANTime$ hours and we can train $\num{10}$ simultaneously on an NVIDIA T$\num{4}$ GPU.
            Training each CIFAR GAN in Section~\ref{sec:train_big_gan} takes $\bigGANTime$ hours and we can only train $\num{1}$ model per NVIDIA T4 GPU.
        \subsection{Optimization in Purely Adversarial Games}\label{sec:app_bilinear}
            We include the alternating update version of Figure~\ref{fig:det_minmax_expAvg_simul_phase_tune} in Figure~\ref{fig:det_minmax_expAvg_alt_phase_tune}, which allows us to contrast simultaneous and alternating updates.
            With alternating updates on a Dirac-GAN for $\lr \!=\! \num{0.1}$ the best value for the momentum coefficient $\momCoeff$ was complex, but we could converge with real, negative momentum.
            Simultaneous updates may be a competitive choice with alternating updates, only if alternating updates cost two gradient evaluations per step, which is common in deep learning setups.

            \newcommand{\heatmapWidth}{.49\linewidth}
            \newcommand{\shiftHeatmapx}{-1.2cm}
            \newcommand{\shiftHeatmapy}{1.2cm}
        \subsection{How Adversarialness Affects Convergence Rates}\label{sec:app_spectrum}
            \vspace{-0.01\textheight}
            We include the extragradient (EG) update with extrapolation parameter $\lr'$ and step size $\lr$:
            \begin{align*}\label{eq:eg_update}\tag{EG}
                \bothParam^{j\!+\!\frac{1}{2}} \!=\! \bothParam^j \!-\! \lr' \bothGrad^{j}\\
                \bothParam^{j\!+\!1} \!=\! \bothParam^j \!-\! \lr \bothGrad^{j\!+\!\frac{1}{2}}\!
            \end{align*}
            and the optimistic gradient (OG) update with extrapolation parameter $\lr'$ and step size $\lr$:
            \begin{align*}\label{eq:eg_update}\tag{OG}
                \bothParam^{j\!+\!1} \!=\! \bothParam^j \!-\! 2 \lr \bothGrad^{j} \!+\! \lr' \bothGrad^{j-1}
            \end{align*}
            
            Often, EG and OG are used with $\lr = \lr'$, however we found that this constraint crippled these methods in cooperative games (i.e., minimization).
            As such, we tuned the extrapolation parameter $\lr'$ separately from the step size $\lr$, so EG and OG were competitive baselines.
            
            We include Figure~\ref{fig:gan_decomp} which investigates a GANs spectrum throughout training, and elaborates on the information that is shown in Figure~\ref{fig:gan_spectrum_main}.
            This shows that there are many real and imaginary eigenvalues, so GAN training is neither purely cooperative or purely adversarial.
            Also, the structure of the set of eigenvalues for the discriminator is different than the generator, which may motivate separate optimizer choices.
            The structure between the players persists through training, but the eigenvalues grow in magnitude and spread out their phases.
            This indicates how adversarial the game is can change during training.
            
            \begin{figure}
                \vspace{-0.02\textheight}
                \centering
                \begin{tikzpicture}
                    \centering
                    \node (img1){\includegraphics[trim={1.0cm .9cm 2.7cm .9cm},clip,width=.28\linewidth]{images/bilinear/ExpAvg_Simul_Deterministic_beta_tune.png}};
                    \node[left=of img1, node distance=0cm, rotate=90, xshift=1.75cm, yshift=-.9cm, font=\color{black}] {Momentum phase $\arg(\momCoeff)$};
                    \node[above=of img1, node distance=0cm, xshift=-.1cm, yshift=-1.3cm,font=\color{black}] {\small \ref{eq:simul_update} for \# grad. eval. = \# steps};
                    
                    \node [right=of img1, xshift=-1.1cm](img2){\includegraphics[trim={1.85cm .9cm .97cm .9cm},clip,width=.3\linewidth]{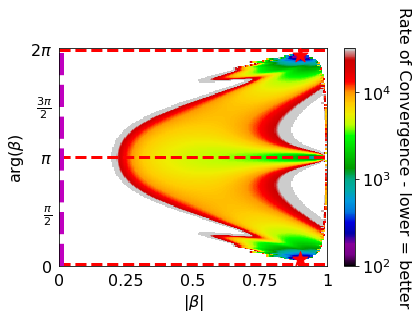}};
                    \node[below=of img2, node distance=0cm, xshift=.0cm, yshift=1.2cm,font=\color{black}] {Momentum Magnitude $| \momCoeff |$};
                    \node[above=of img2, node distance=0cm, xshift=-.3cm, yshift=-1.3cm,font=\color{black}] {\small \ref{eq:alt_update} for \# grad. eval.};
                    \node[right=of img2, node distance=0cm, rotate=270, xshift=-1.9cm, yshift=-.9cm, font=\color{black}] {\# grad. eval. to converge};
                    
                    \node [right=of img2, xshift=-.65cm](img3){\includegraphics[trim={1.85cm .9cm .97cm .9cm},clip,width=.3\linewidth]{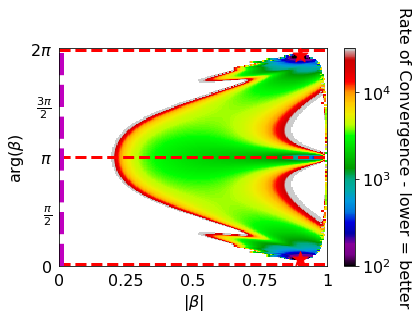}};
                    \node[above=of img3, node distance=0cm, xshift=-.3cm, yshift=-1.3cm,font=\color{black}] {\small \ref{eq:alt_update} for \# steps};
                    \node[right=of img3, node distance=0cm, rotate=270, xshift=-1.5cm, yshift=-.9cm, font=\color{black}] {\# steps to converge};
                \end{tikzpicture}
                \vspace{-0.025\textheight}
                \caption{
                    We show many steps and gradient evaluations, both simultaneous and alternating complex momentum on a Dirac-GAN take for a set solution distance.
                    We fix step size $\lr \!=\! \num{0.1}$ as in Figure~\ref{fig:trajectories_limited}, while varying the phase and magnitude of our momentum $\momCoeff  \!=\! |\momCoeff| \exp(i \arg(\momCoeff))$.
                    There is a {\color{red}red} star at the optima, dashed {\color{red}red} lines at real $\momCoeff$, and a dashed {\color{magenta}magenta} line for simultaneous or alternating gradient descent.
                    We only display color for convergent setups.
                    \emph{Left:} Simultaneous complex momentum (\ref{eq:simul_update}).
                    This is the same as Figure~\ref{fig:det_minmax_expAvg_simul_phase_tune}, which we repeat to contrast with alternating updates.
                    There are no real-valued $\momCoeff$ that converge for this -- or any -- $\lr$ with simultaneous updates ~\citep{gidel2018negative}.
                    Simultaneous updates can parallelize gradient computation for all players at each step, thus costing only one gradient evaluation per step for many deep learning setups.
                    The best rate of convergence per step and gradient evaluation is $\approx {\color{red}\num{.955}}$.
                    \emph{Middle:} Alternating complex momentum (\ref{eq:alt_update}), where we show how many gradient evaluations -- as opposed to steps -- to reach a set solution distance.
                    Alternating updates are bottlenecked by waiting for first player's update to compute the second players update, effectively costing two gradient evaluations per step for many deep learning setups.
                    Negative momentum can converge here, as shown by \citet{gidel2018negative}, but the best momentum is still complex.
                    Also, Alternating updates can make the momentum phase $\arg(\momCoeff)$ choice less sensitive to our convergence.
                    The best rate of convergence per gradient evaluation is $\approx {\color{red}\num{.965}}$.
                    \emph{Right:} \ref{eq:alt_update}, where we show how many steps to reach a set solution distance.
                    The best rate of convergence per step is $\approx {\color{red}\num{.931}}$.
                    \textbf{Takeaway:} If we can parallelize computation of both players gradients we can benefit from \ref{eq:simul_update}, however if we can not then \ref{eq:alt_update} can converge more quickly and for a broader set of optimizer parameters.
                    In any case, the best solution uses a complex momentum $\momCoeff$ for this $\lr$.
                    %
                }
                \label{fig:det_minmax_expAvg_alt_phase_tune}
                \vspace{-0.02\textheight}
            \end{figure}
            \newcommand{\ytickspaceApp}{0.049\textwidth}
            \begin{figure}
                \vspace{-0.005\textheight}
                \centering
                \begin{tikzpicture}
                    \centering
                    \node (img1)[]{\hspace{-1.5cm}\includegraphics[trim={.5cm .5cm .37cm .475cm},clip,width=.39\linewidth]{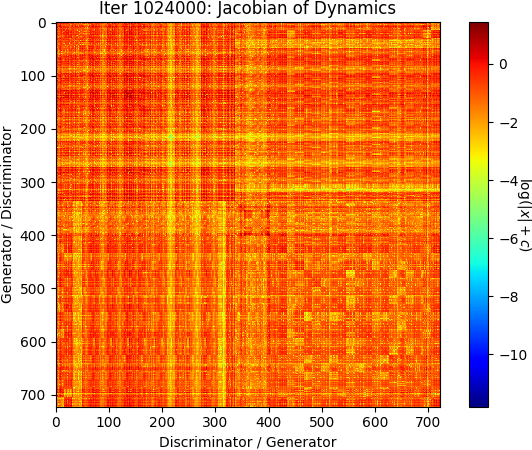}};
                    \node[left=of img1, node distance=0cm, rotate=90, xshift=2.0cm, yshift=.675cm, font=\color{black}] {Joint-gradient $\bothGrad$ index $l$};
                    \node[right=of img1, node distance=0cm, rotate=270, xshift=-2.4cm, yshift=-.75cm, font=\color{black}] {$\log(|a_{lk}| + \epsilon)$ for $a_{lk}$ in $\nabla_{\!\bothParam} \bothGrad^{j}$};
                    \node[below=of img1, node distance=0cm, xshift=-1.1cm, yshift=1.2cm,font=\color{black}] {Joint-parameter $\bothParam$ index $k$};
                    \node[above=of img1, node distance=0cm, xshift=-.8cm, yshift=-1.2cm,font=\color{black}] {Jacobian of joint-gradient $\nabla_{\!\bothParam} \bothGrad^{j}$ for GAN};
                    
                    \node (img2)[right=of img1, xshift=0.25cm]{\includegraphics[trim={.5cm .5cm .0cm .475cm},clip,width=.43\linewidth]{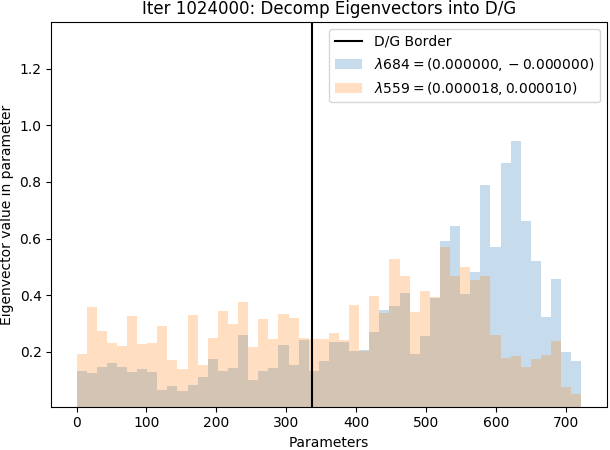}};
                    \node[left=of img2, node distance=0cm, rotate=90, xshift=2.25cm, yshift=-0.825cm, font=\color{black}] {Abs of eigenvector component};
                    \node[below=of img2, node distance=0cm, xshift=-.1cm, yshift=1.2cm,font=\color{black}] {Index into $\mathbf{v}$ -- first $\num{337}$ are for D's params};
                    \node[above=of img2, node distance=0cm, xshift=-.1cm, yshift=-1.0cm,font=\color{black}] {Eigenvector $\mathbf{v}$ components for eigenvalues $\eigval \in \spectrum(\nabla_{\!\bothParam} \bothGrad)$};
                    
                    
                    \node (img3)[below=of img1, yshift=-.4cm, xshift=0]{\includegraphics[trim={1.3cm .5cm 1.8cm .475cm},clip,width=.41\linewidth]{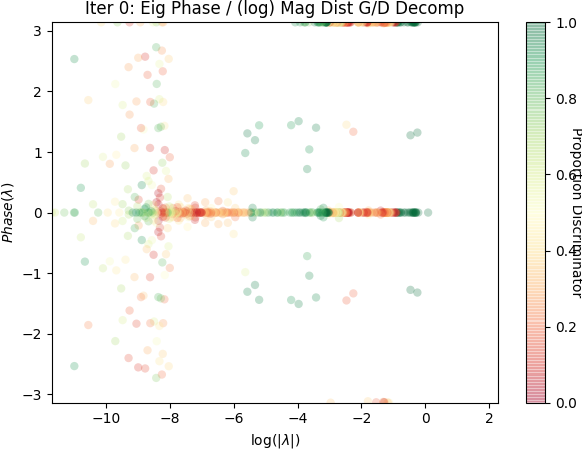}};
                    \node (img4)[right=of img3, xshift=\shiftHeatmapx]{\includegraphics[trim={1.3cm .5cm .9cm .475cm},clip,width=.44\linewidth]{images/16_gan_example/eig_mag_phase_distribution_decomposition_1024000.png}};
                    
                    \node (img3ytick)[left=of img3, node distance=0cm, rotate=90, xshift=2.8cm, yshift=-0.825cm, font=\color{black}] {-$\pi$ \hspace{\ytickspaceApp} -$\frac{\pi}{2}$ \hspace{\ytickspaceApp} $\num{0}$ \hspace{\ytickspaceApp} \hphantom{-}$\frac{\pi}{2}$ \hspace{\ytickspaceApp} \hphantom{-}$\pi$};
                    \node[left=of img3ytick, node distance=0cm, rotate=90, xshift=5.05cm, yshift=-0.43cm, font=\color{black}] {Phase of eigenvalue $\arg(\eigval)$};
                    
                    \node (img4colortick)[right=of img4, node distance=0cm, rotate=270, xshift=-2.8cm, yshift=-0.83cm, font=\color{black}] {disc. \hspace{0.08\textwidth} unsure \hspace{0.09\textwidth} gen.};
                    \node[right=of img4colortick, node distance=0cm, rotate=270, xshift=-5.3cm, yshift=-.55cm, font=\color{black}] {Does eigenvector point at a player?};

                    \node[below=of img3, node distance=0cm, xshift=3.65cm, yshift=1.2cm, font=\color{black}] {Log-magnitude of eigenvalue $\log(|\eigval|)$};
                    
                    \node[above=of img3, node distance=0cm, xshift=0.0cm, yshift=-1.25cm, font=\color{black}] {Start of training};
                    \node[above=of img4, node distance=0cm, xshift=0.0cm, yshift=-1.25cm, font=\color{black}] {End of training};
                    
                    \node[above=of img3, node distance=0cm, xshift=3.2cm, yshift=-.6cm, font=\color{black}] {Log-polar graph of the spectrum of Jacobian of the joint-gradient $\spectrum(\nabla_{\!\bothParam} \bothGrad^{j})$ throughout training};
                \end{tikzpicture}
                \vspace{-0.02\textheight}
                \caption{
                    These plots investigate the spectrum of the Jacobian of the joint-gradient for the GAN in Figure~\ref{fig:gan_spectrum_main} through training.
                    The spectrum is key for bounding convergence rates in learning algorithms.
                    \newline
                    \emph{Top left:} The Jacobian $\nabla_{\bothParam} \bothGrad$ for a GAN on a $\num{2}$D mixture of Gaussians with a two-layer, fully-connected $\ganSize$ hidden unit discriminator (D) and generator (G) at the end of training.
                    In the concatenated parameters $\bothParam \in \mathbb{R}^{723}$, the first $\num{337}$ are for D, while the last $\num{386}$ are for G.
                    We display the $\log$ of the absolute value of each component plus $\epsilon = 10^{-10}$.
                    The upper left and lower right quadrants are the Hessian of D and G's losses respectively.
                    \newline
                    \emph{Top Right:} We visualize two randomly sampled eigenvectors from $\nabla_{\bothParam} \bothGrad$.
                    The first part of the parameters is for the discriminator, while the second part is for the generator.
                    Given an eigenvalue with eigenvector $\mathbf{v}$, we roughly approximate attributing eigenvectors to players by calculating how much of it lies in D's parameter space with $\frac{\| \mathbf{v}_{1:|\textnormal{D}|} \|_1}{\| \mathbf{v} \|_1} = \frac{\| \mathbf{v}_{1:337} \|_1}{\| \mathbf{v} \|_1}$.
                    If this ratio is near $\num{1}$ (or $\num{0}$) and say \emph{the eigenvector mostly points at D (or G)}.
                    The blue eigenvector mostly points at G, while the orange eigenvector is unclear.
                    Finding useful ways to attribute eigenvalues to players is an open problem.
                    \newline
                    \emph{Bottom:} The spectrum of the Jacobian of the joint-gradient $\spectrum(\nabla_{\bothParam} \bothGrad^{j})$ is shown in log-polar coordinates, because it is difficult to see structure when graphing in Cartesian (i.e., $\Re$ and $\Im$) coordinates, due to eigenvalues spanning orders of magnitude, while being positive and negative.
                    The end of training is when we stop making progress on the log-likelihood.
                    We have imaginary eigenvalues at $\arg(\eigval) = \pm \nicefrac{\pi}{\num{2}}$, positive eigenvalues at $\arg(\eigval) = \num{0}$, and negative eigenvalues at $\arg(\eigval) = \pm \pi$.
                    \newline
                    \emph{Takeaway:}
                    There is a banded structure for the coloring of the eigenvalues that persists through training.
                    We may want different optimizer parameters for the discriminator and generator, due to asymmetry in their associated eigenvalues.
                    Also, the magnitude of the eigenvalues grows during training, and the $\arg$s spread out indicating the game can change eigenstructure near solutions.
                }\label{fig:gan_decomp}
            \end{figure}

        \newcommand{\latentDim}{\num{4}}
        \newcommand{\mixtureNum}{\num{8}}
        \newcommand{\numHiddenUnits}{\num{256}}
        \newcommand{\numIterations}{\num{100000}}
        \newcommand{\batchSizeSmallGAN}{\num{100000}}
        \newcommand{\dataDim}{\num{2}}
        \vspace{-0.01\textheight}
        \subsection{Training GANs on $\num{2}$D Distributions}\label{sec:app_2d_gan}
        \vspace{-0.01\textheight}
            For $\num{2}$D distributions, the data is generated by sampling from a mixture of $\mixtureNum$ Gaussian distributions, which are distributed uniformly around the unit circle.
            
            
            For the GAN, we use a fully-connected network with $\num{4}$ hidden ReLU~\citep{hahnloser2000digital} layers with $\numHiddenUnits$ hidden units.
            We chose this architecture to be the same as \citet{gidel2018negative}.
            Our noise source for the generator is a $\num{4}$D Gaussian.
            We trained the models for $\numIterations$ iterations.
            The performance of the optimizer settings is evaluated by computing the negative log-likelihood of a batch of $\batchSizeSmallGAN$ generated $\dataDim$D samples.
            
            \newcommand{\ganHeatmapWidth}{.46\textwidth}
            \newcommand{\legendWidth}{.025\textwidth}
            \begin{figure}
                \vspace{-0.01\textheight}
                \centering
                \begin{tikzpicture}
                    \centering
                    \node (img1){\includegraphics[trim={4.2cm 1.75cm 8.85cm 2.57cm},clip, width=\ganHeatmapWidth]{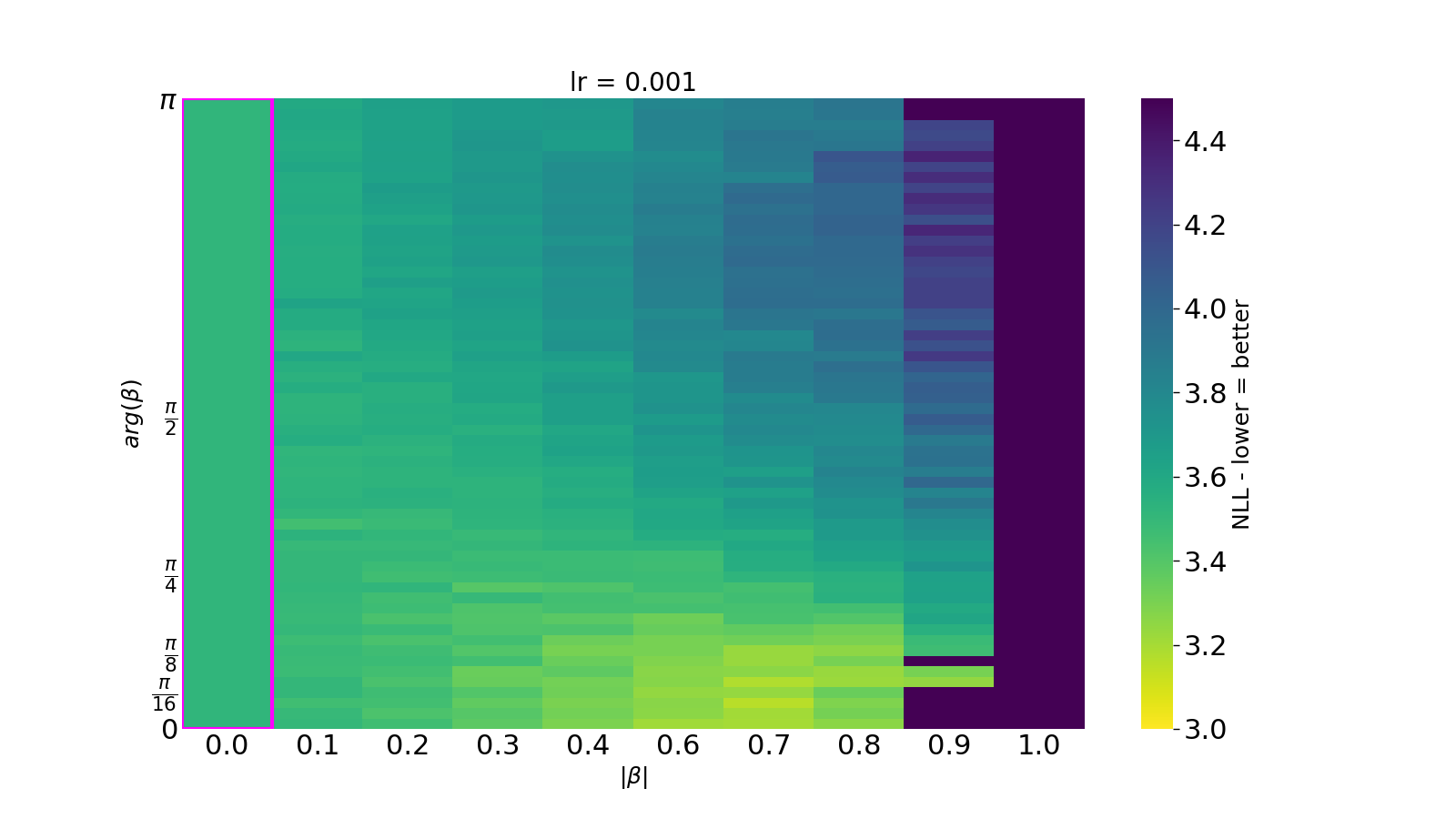}};
                    \node[left=of img1, node distance=0cm, rotate=90, xshift=2.25cm, yshift=-.9cm, font=\color{black}] {Momentum phase $\arg(\momCoeff)$};
                    \node[above=of img1, node distance=0cm, xshift=0cm, yshift=-1.1cm,font=\color{black}] {Step size $\lr = \num{0.001}$};
                    \node(img2cap)[below=of img1, node distance=0cm, xshift=3.3cm, yshift=1.1cm,font=\color{black}] {Momentum Magnitude $| \momCoeff |$};
                    
                    \node (img2)[right=of img1, node distance=0cm, xshift=-1.4cm]{\includegraphics[trim={5.1cm 1.75cm 6.4cm 2.57cm},clip,width=\ganHeatmapWidth + 0.02\textwidth]{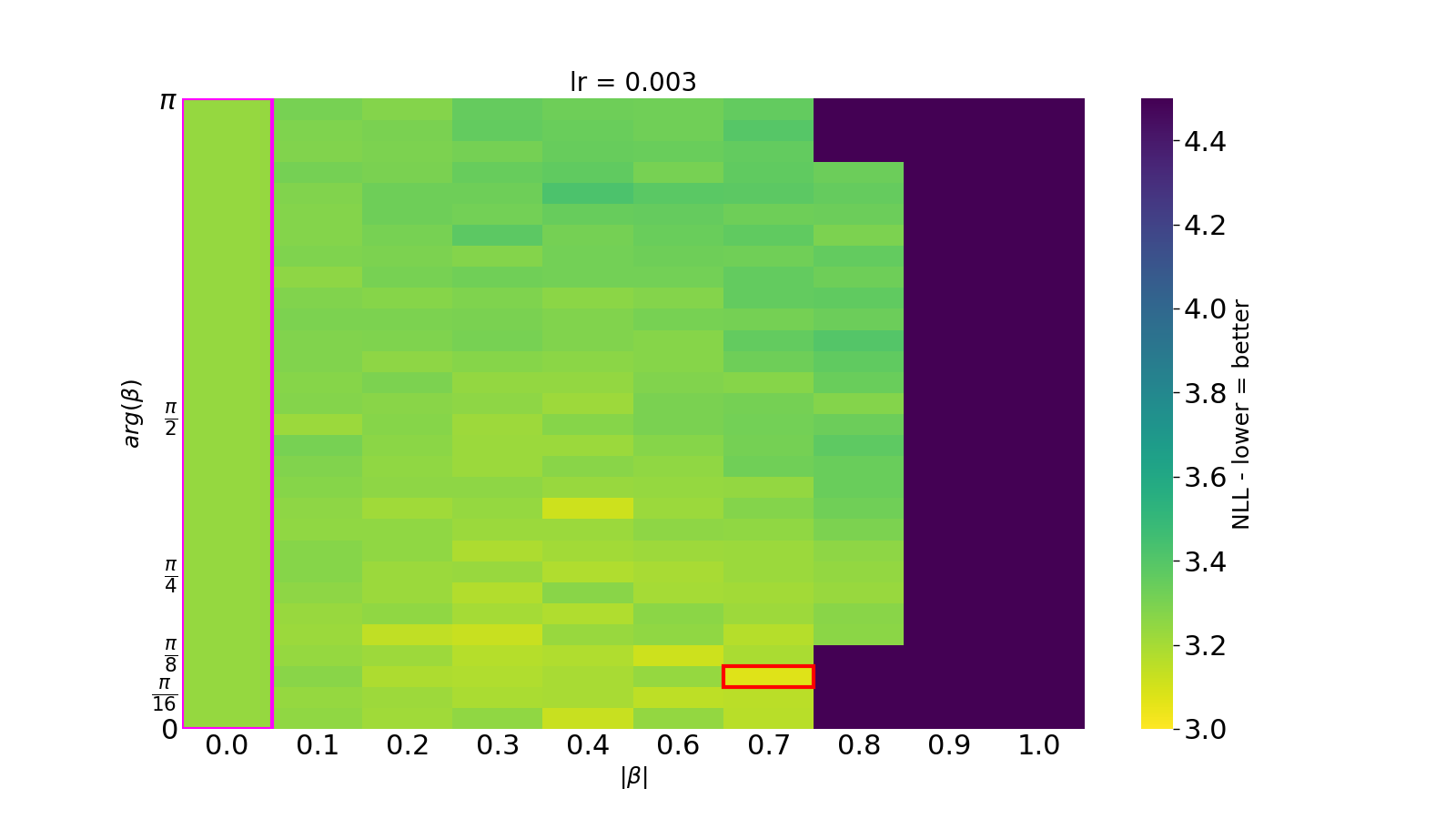}};
                    \node[above=of img2, node distance=0cm, xshift=0.0cm, yshift=-1.1cm,font=\color{black}] {Step Size $\lr = \num{0.003}$};
                    \node[right=of img2, node distance=0cm, rotate=270, xshift=-2.15cm, yshift=-.85cm, font=\color{black}] {NLL, lower = better};
                \end{tikzpicture}
                \vspace{-0.03\textheight}
                \caption{
                    Heatmaps of the negative log-likelihood (NLL) for tuning $\arg(\momCoeff), |\momCoeff|$ with various fixed $\lr$ on a $\num{2}$D mixture of Gaussians GAN.
                    We highlight the best performing cell in {\color{red}red}, which had $\arg(\momCoeff) \approx \nicefrac{\pi}{\num{8}}$.
                    Runs equivalent to alternating SGD are shown in a {\color{magenta}magenta} box.
                    We compare to negative momentum with alternating updates as in \citet{gidel2018negative} in the top row with $\arg(\momCoeff) = \pi$.
                    \emph{Left}: Tuning the momentum with $\lr = \num{0.001}$.
                    \emph{Right}: Tuning the momentum with $\lr = \num{0.003}$.
                }
                \label{fig:gan_heatmaps}
            \end{figure}
        \begin{figure}
            \vspace{-0.01\textheight}
            \centering
            \begin{subfigure}{.45\linewidth}
                \includegraphics[trim={1.25cm 1.25cm 1.25cm 1.25cm},clip,width=.99\textwidth]{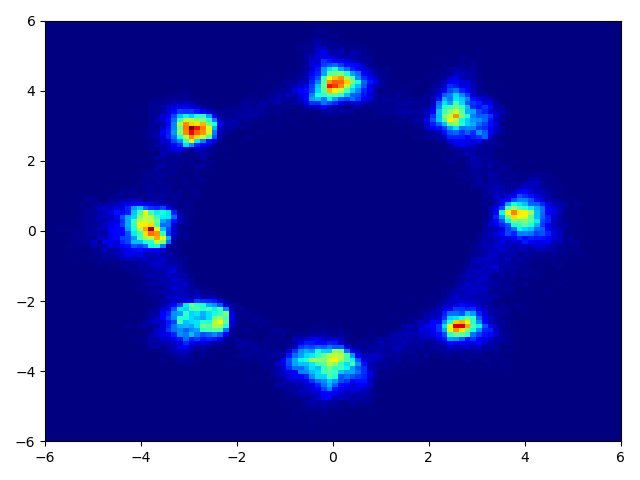}
                \newsubcap{Mixture of Gaussian samples from GAN with the best hyperparameters from the heatmaps in Appendix Figure~\ref{fig:gan_heatmaps}}
                \label{fig:2d_samples}
            \end{subfigure}
            \hspace{0.05\linewidth}
            \begin{subfigure}{.45\linewidth}
                \includegraphics[trim={0cm 0cm 3.6cm 6cm},clip,width=.99\textwidth]{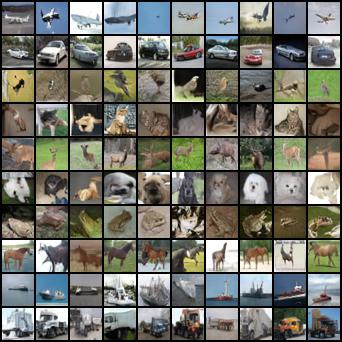}
                \newsubcap{Class-conditional CIFAR-10 samples from GAN with the best hyperparameters from the heatmap in \ref{fig:biggan_inception_heatmap}}
                \label{fig:biggan_samples}
            \end{subfigure}
            %
                    
            \vspace{-0.01\textheight}
        \end{figure}
        \begin{figure}
            \centering
            \begin{subfigure}{0.45\linewidth}
                \vspace{-0.01\textheight}
                \centering
                \begin{tikzpicture}
                    \centering
                    \node (img1){\includegraphics[trim={.875cm .9cm 0cm 0cm},clip,width=.84\linewidth]{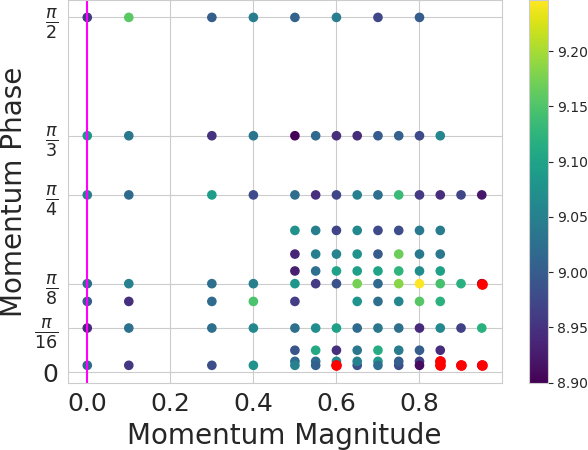}};
                    \node[left=of img1, node distance=0cm, rotate=90, xshift=2.2cm, yshift=-.9cm, font=\color{black}] {Momentum phase $\arg(\momCoeff_1)$};
                    \node[below=of img1, node distance=0cm, xshift=-.1cm, yshift=1.2cm,font=\color{black}] {Momentum magnitude $| \momCoeff_1 |$};
                    \node[right=of img1, node distance=0cm, rotate=270, xshift=-2.5cm, yshift=-.9cm, font=\color{black}] {Inception score, higher = better};
                \end{tikzpicture}
                \newsubcap{
                    The inception score (IS) for a grid search on $\arg(\momCoeff_1)$ and $|\momCoeff_1|$ for training BigGAN on CIFAR-10 with the Adam variant in Algorithm~\ref{alg:complex_adam}.
                    The $\momCoeff_1$ is complex for the discriminator, while the generator's optimizer is fixed to author-supplied defaults.
                    {\color{red}Red} points are runs that failed to train to the minimum IS in the color bar.
                    The vertical {\color{magenta}magenta} line denotes runs equivalent to alternating SGD.
                    Negative momentum failed to train for any momentum magnitude $|\momCoeff_1| > .5$, so we do not display it for more resolution near values of interest.
                }
                \label{fig:biggan_inception_heatmap}
                \vspace{-0.01\textheight}
            \end{subfigure}
            \hspace{0.05\linewidth}
            \begin{subfigure}{0.45\linewidth}
                \centering
                \begin{tikzpicture}
                    \centering
                    \node (img1){\includegraphics[trim={.875cm .9cm 0cm 0cm},clip,width=.84\linewidth]{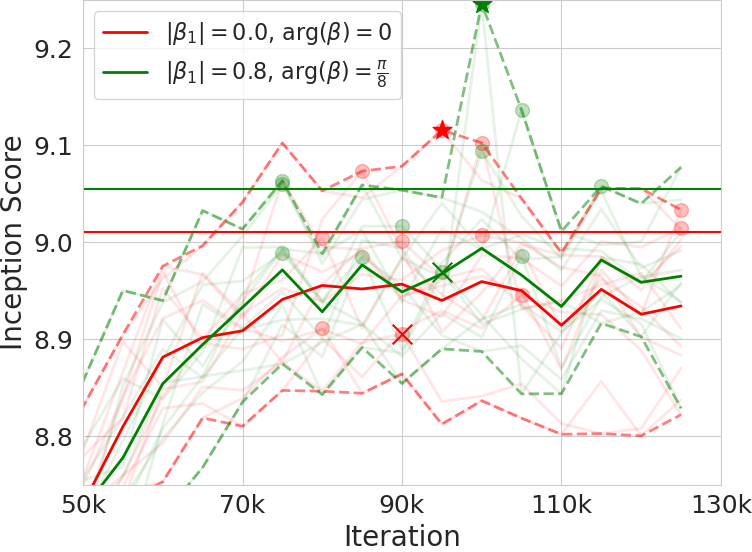}};
                    \node[left=of img1, node distance=0cm, rotate=90, xshift=1.5cm, yshift=-.5cm, font=\color{black}] {Inception score (IS)};
                    \node[below=of img1, node distance=0cm, xshift=-.1cm, yshift=1.0cm,font=\color{black}] {Iteration};
                \end{tikzpicture}
                \newsubcap{
                    We compare the best optimization parameters from grid search Figure~\ref{fig:biggan_inception_heatmap} for our complex Adam variant (i.e., Algorithm~\ref{alg:complex_adam}) shown in {\color{green}green}, with the author provided values shown in {\color{red}red} for the CIFAR-10 BigGAN over $\num{10}$ seeds.
                    A star is displayed at the best IS over all runs, a cross is displayed at the worst IS over all runs, while a circle is shown at the best IS for each run.
                    Dashed lines are shown at the max/min IS over all runs at each iteration, low-alpha lines are shown for each runs IS, while solid lines are shown for the average IS over all seeds at each iteration.
                    The results are summarized in Table~\ref{tab:biggan}.
                }
                    
                \label{fig:biggan_inception_runs}
            \end{subfigure}
            
        \end{figure}
\end{document}